\documentclass{article}

\usepackage{microtype}
\usepackage{graphicx}
\usepackage{subcaption}
\usepackage{booktabs} 

\usepackage{hyperref}

\usepackage[accepted]{icml2026}

\usepackage[textsize=tiny]{todonotes}
\usepackage{amsmath}
\usepackage{amssymb}
\usepackage{mathtools}
\usepackage{amsthm}
\usepackage{color}
\usepackage{cuted}
\usepackage{framed}
\usepackage{booktabs}
\usepackage{amsfonts}
\usepackage{amssymb}
\usepackage{natbib}
\usepackage[table]{xcolor}
\DeclareMathAlphabet{\bbold}{U}{bbold}{m}{n}
\usepackage{xcolor}  
\usepackage[most]{tcolorbox}
\usepackage[misc]{ifsym} 
\usepackage{hyperref}
\usepackage{url}
\usepackage{mathtools}
\usepackage{amsthm}
\usepackage{array}
\usepackage{comment}

\usepackage{multirow}
\usepackage{lineno}
\usepackage{pifont}       
\usepackage{xcolor}       
\usepackage{makecell}
\usepackage{graphicx}   
\usepackage{subcaption} 
\usepackage{threeparttable} 
\usepackage{enumitem}
\usepackage{wrapfig}
\usepackage{framed}
\usepackage{floatflt}

\newcommand{\cmark}{\textcolor{green!60!black}{\ding{51}}}  
\newcommand{\xmark}{\textcolor{red!70!black}{\ding{55}}}    
\newcommand{\lvxp}[1]{{\color{black}#1}}

\usepackage[capitalize,noabbrev]{cleveref}

\theoremstyle{plain}
\newtheorem{theorem}{Theorem}[section]
\newtheorem{proposition}[theorem]{Proposition}
\newtheorem{problem}[theorem]{Problem}
\newtheorem{definition}[theorem]{Definition}
\newtheorem{corollary}[theorem]{Corollary}
\theoremstyle{definition}

\newtheorem{example}{Example}
\newtheorem{remark}{Remark}

\newtcolorbox[auto counter, number within=section]{examplebox}
{colframe=cyan,
 colback=white,
 coltitle=black,
 arc=2mm,
 boxrule=0.3mm,
 toptitle=1mm,
 bottomtitle=1mm,
 boxsep=0.6mm,      
  left=2.2mm,        
  right=2.2mm,       
  top=0.5mm,       
  bottom=0.5mm     
 }

\usepackage[textsize=tiny]{todonotes}

\icmltitlerunning{Beyond Rational Illusion: Behaviorally Realistic Strategic Classification}

\begin{document}

\twocolumn[
  \icmltitle{Beyond Rational Illusion: Behaviorally Realistic Strategic Classification}



  \icmlsetsymbol{equal}{*}
  \icmlsetsymbol{corr}{\Letter}
  \begin{icmlauthorlist}
    \icmlauthor{Xinpeng Lv}{nudt}
    \icmlauthor{Yunxin Mao}{nudt}
    \icmlauthor{Renzhe Xu}{sufe}
    \icmlauthor{Chunyuan Zheng}{pku-gaist}
    \icmlauthor{Yikai Chen}{nudt}
    \icmlauthor{Haoxuan Li}{pku-gaist}
    \icmlauthor{Yang Shi}{pku-cs}
    \icmlauthor{Jinxuan Yang}{heu}
    \icmlauthor{Yuanlong Chen}{hit}
    \icmlauthor{Yuanxing Zhang}{pku-cs}
    \icmlauthor{Shaowu Yang}{nudt}
    \icmlauthor{Wenjing Yang}{nudt}
    \icmlauthor{Zhouchen Lin}{pku-gaist}
    \icmlauthor{Haotian Wang}{nudt,corr}
  \end{icmlauthorlist}
  \icmlaffiliation{nudt}{College of Computer, National University of Defense Technology, Changsha, China}
  \icmlaffiliation{sufe}{Institute for Theoretical Computer Science, Shanghai University of Finance and Economics, Shanghai, China}
  \icmlaffiliation{pku-gaist}{State Key Lab of General AI, School of Intelligence Science and Technology, Peking University, Beijing, China}
  \icmlaffiliation{heu}{Faculty of Engineering, University of Sydney, Sydney, Australia}
  \icmlaffiliation{hit}{Faculty of Computing, Harbin Institute of Technology, Harbin, China}
  \icmlaffiliation{pku-cs}{School of Computer Science, Peking University, Beijing, China}
  \icmlaffiliation{pku-gaist}{State Key Lab of General AI, School of Intelligence Science and Technology, Peking University, Beijing, China}

  \icmlcorrespondingauthor{Haotian Wang}{wanghaotian13@nudt.edu.cn}

  \icmlkeywords{Machine Learning, Strategic Classification, Prospect Theory}

  \vskip 0.3in
]



\printAffiliationsAndNotice{}  

\begin{abstract}
Strategic classification~(SC) studies the interaction between decision models and agents who strategically manipulate their features for favorable outcomes. 
Existing SC frameworks typically rely on the idealized assumption that agents are strictly rational. 
However, evidence from behavioral economics and psychology consistently shows that real-world decision-making is often shaped by cognitive biases, deviating from pure rationality. 
To formalize this limitation, we identify and define a new problem setting, termed the\textit{ behaviorally realistic strategic classification problem}, where agents’ strategic manipulations deviate from full rationality due to psychological biases.
Motivated by the identified limitation, we propose the \textbf{Prospect-Guided Strategic Framework}~(Pro-SF) to address the problem, a principled framework grounded in prospect theory to model and learn under behaviorally realistic strategic responses.
Specifically, to capture behaviorally realistic strategic manipulations, our framework reformulates the Stackelberg-style interaction between agents and the decision-maker by incorporating three key mechanisms inspired by prospect theory, including the asymmetry between benefits and costs, different subjective reference points, and non-rational probability distortion. Experiments on synthetic and real-world datasets establish Pro-SF as a behaviorally grounded approach to strategic classification, bridging machine learning and behavioral economics for more reliable deployment in the real world.
\end{abstract}

\section{Introduction}

Machine learning models are playing an increasingly critical role in diverse human-serving domains, such as hiring~\cite{sanchez2020does}, credit scoring~\cite{jagtiani2019roles}, and college admissions~\cite{kuvcak2018machine}. In such settings, individuals may strategically modify their observable features to obtain favorable decisions. This phenomenon reflects Goodhart’s Law~\cite{Strathern_1997}: “\textit{When a measure becomes a target, it ceases to be a good measure}.” Once a model’s decision rule becomes known or anticipated, agents often engage in 'gaming' behaviors to manipulate outcomes. For example, a loan applicant might temporarily inflate their reported income to appear more creditworthy. To anticipate such manipulations, strategic classification (SC)~\cite{hardt2016strategic} provides a specialized machine learning~(ML) framework by considering the Stackelberg-style interaction between decision makers and strategic agents~\cite{pmlr-v139-ghalme21a,singh2024optimal,chen2020learning}, serving as a key bridge between ML and social sciences.

\begin{figure*}[t]
\vskip -0.05in
    \centering
    \includegraphics[width=0.98\linewidth]{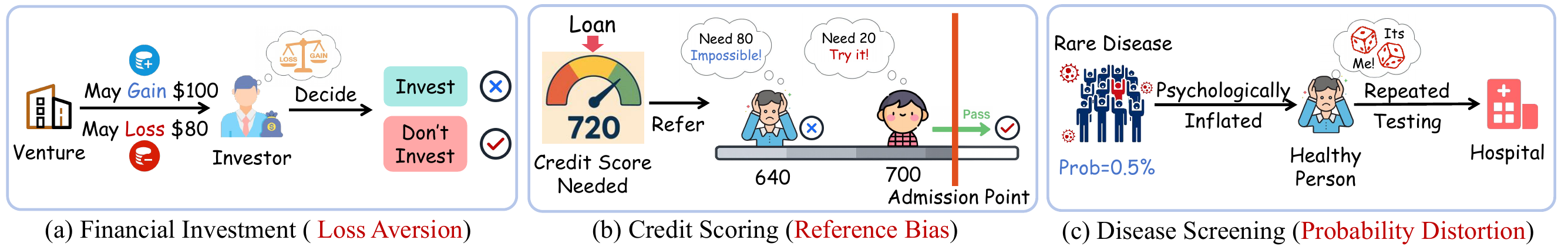}
    \caption{Illustrative real-life scenarios of behavioral biases: (a) financial investment shaped by loss aversion, (b) credit scoring influenced by reference bias, and (c) disease screening affected by probability distortion.}
    \label{fig1}
    \vskip -0.15in
\end{figure*}

Given its growing influence, it is increasingly essential to critically examine the foundational principles of SC. To be specific, the framework of SC rests on a simplified assumption that \textit{agents are perfectly rational}~\cite{hardt2016strategic,Milli2019SocialCost}. Consequently, under this view, an agent modifies features if, and only if, the expected benefit outweighs the cost. However, like a double-edged sword, the assumption of fully rational agents might result in conflicts with realistic scenarios, as individuals often behave in ways that deviate sharply from rational utilities:

\begin{itemize}[leftmargin=*, itemsep=0pt, topsep=0pt]
  \item {\bf Example 1.} In financial investment~\cite{banerji2020empirical}, individuals often react more strongly to a potential \$80 loss than to a potential \$100 gain of equal probability. The same magnitude of outcome thus triggers disproportionate behavioral responses.
 
  \item {\bf Example 2.} In credit scoring~\cite{banerji2020empirical}, consider loan approval requires applicants to exceed a threshold \(A\). Those whose subjective reference point \(B\) is just below \(A\) tend to make marginal adjustments, whereas those far below the threshold usually forgo effort. 
  
  \item {\bf Example 3.} In disease screening~\cite{dwyer2022exploring}, consider a rare disease with a true prevalence of only 0.5\%. Although the objective probability of infection is negligible, some individuals persist in seeking repeated testing, subjectively inflating the small chance of illness.

\end{itemize}

Unfortunately, the non-rational behaviors characterized in the three examples, i.e., asymmetry between the benefit and cost, behaviors based on different reference points, and distorted probability towards events, are not considered in conventional SC frameworks. More broadly, extensive evidence from behavioral economics and psychology shows that real-world decision making systematically deviates from rational optimization~\cite{carroll1990decision,tversky1992advances,jones1999bounded,ariely2008predictably}. Therefore, we highlight that the rational-agent assumption, while mathematically convenient, often collapses in real-world settings. This mismatch motivates the need for a refined SC framework to capture such realistic behavioral deviations more precisely, which raises a central research challenge:
\begin{framed}
    \textit{Can existing SC methods account for strategic manipulations shaped by psychological biases, beyond the conventional rational-utility paradigm?}
\end{framed}

In response to this question, we first define \textbf{behaviorally realistic strategic classification}~\textit{(BR-SC)} as a new problem where agents' manipulations reflect realistic behavioral patterns driven by psychological biases, rather than fully rational best-response rules.
Accordingly, the classifier is designed to account for these manipulations, ensuring robustness in deployment. Fortunately, prospect theory~\cite{kahneman2013prospect} and followed behavioral economics highlight three pervasive mechanisms for realistic behaviors:
(1) \emph{\textbf{Loss aversion}.} Individuals tend to weigh potential losses more heavily than equivalent gains (see Fig.~\ref{fig1}(a)).  
(2) \emph{\textbf{Reference bias}.} Individual decision depends on subjective reference points, which differ across individuals based on their personal circumstances, expectations, or prior experiences~(Fig.~\ref{fig1}(b)).  
(3) \emph{\textbf{Probability distortion}.} Individuals overweight small-probability events, showing a tendency to “gamble” on unlikely opportunities~(see Fig.~\ref{fig1}(c)).


These behavioral mechanisms result in two consequential failure modes for existing SC methods, \emph{over-defense} and \emph{under-defense}, which degrade robustness of decision models for SC~(see detailed derivation in Sec.~\ref{sec:mismatch}).
Therefore, we propose the \textbf{Prospect-Guided Strategic Framework}~\textit{(Pro-SF)} to address the BR-SC problem.
Pro-SF introduces a paradigm shift for SC by integrating three key principles (i.e., asymmetry between gains and losses, subjective reference points, and probability distortion) into the Stackelberg game framework of SC. This paradigm shift captures how agents actually manipulate their features in practice and redefines both the dynamics of agents’ strategic manipulation and the decision maker’s optimization objective.

\textbf{Our main contributions are summarized as follows:}
\begin{itemize}[leftmargin=*, itemsep=0pt, topsep=0pt]
\item We identify and formalize the limitations of the rational-agent assumption in strategic classification by introducing a new problem setting, termed \textit{behaviorally realistic strategic classification}~(BR-SC). We further characterize two failure modes induced by rational-agent modeling, namely \emph{over-defense} and \emph{under-defense}.

\item We propose the \textbf{prospect-guided strategic framework}~\textit{(Pro-SF)}, which integrates loss aversion, reference bias, and probability distortion into the Stackelberg game framework,  providing a more realistic solution for strategic classification.

\item We conduct extensive experiments on synthetic and real-world datasets to demonstrate that Pro-SF achieves robust performance across diverse behavioral regimes. Ablation studies and sensitivity analyses further illustrate how each behavioral component contributes to robustness.
\end{itemize}

\section{Related Work}
\label{rwork}

\subsection{Strategic Classification}
Strategic classification (SC) studies how individuals manipulate features to obtain favorable decisions~\cite{hardt2016strategic}. 
Prior work largely focuses on designing decision rules that are robust to such manipulations~\cite{dong2017strategicclassificationrevealedpreferences,shavit2020causal,chen2020learning,NEURIPS2021_f1404c26,zrnic2021leads,tsirtsis2024optimal,liu2024inductive,lv2026breaking}, as well as leveraging strategic behavior to incentivize genuine improvement via causal or action-based formulations~\cite{miller2020strategic,chen2023learning,horowitz2023causal,vo2024causal,efthymiou2025incentivizing,chang2024s,wang2022estimating,wang2025effective,yang2025advanced}. 
Related settings include performative prediction, where repeated deployment alters the data distribution~\cite{perdomo2020performative,rosenfeld2020predictions,hardt2022performative,mendler2022anticipating,wangtransformers,lv2026tabular}, and societal-level regulation of strategic behavior, e.g., optimizing long-term welfare or mitigating demographic disparities~\cite{haghtalab2020maximizing,estornell2023incentivizing,xie2024non,pmlr-v162-zhang22l,10.1145/3593013.3594006,keswani2023addressing,lv2026partial}. 
Recent evidence suggests that human strategic manipulation deviates substantially from the rational assumption~\cite{ebrahimi2025double,xie2025strategic}, motivating our behaviorally realistic formulation (BR-SC) and the prospect-theoretic framework (Pro-SF).
\textbf{More related work} on strategic classification is included in Appendix~\ref{AppA1}.

\subsection{Behavioral Economics in Machine Learning}
Behavioral economics~\cite{mullainathan2000behavioral} challenges the classical assumption that agents act as perfectly rational utility maximizers, emphasizing instead that choices are systematically shaped by cognitive biases and contextual factors~\cite{tversky1992advances,ariely2008predictably}. A central framework crystallizing these insights is \emph{prospect theory}~\cite{tversky1992advances,kahneman2013prospect}, which models how individuals evaluate outcomes relative to reference points, exhibit asymmetric sensitivity to gains and losses, and distort probabilities. 
These principles have influenced multiple domains: in economics and social science, they inform finance, consumer behavior, and public policy~\cite{borkar2021prospect,mercer2005prospect,vis2011prospect}; in computational settings, they shape reinforcement learning~\cite{shen2014risk,jie2015cumulative}, mechanism design~\cite{kuvcak2018machine,LEONETI20211042}, and resource allocation~\cite{holmes2011management}. \textbf{More related work} is included in Appendix~\ref{AppA2}.

\section{Preliminary}
\label{pre}
We briefly review the strategic classification~(SC) paradigm. Random variables are denoted by uppercase letters (e.g., $X$, $Y$), their realizations by lowercase (e.g., $x$, $y$), and boldface for vectors or matrices (e.g., $\mathbf{x}, \mathbf{X}$).

\subsection{Rational Strategic Classification Model}
\label{subsecsml}
The strategic classification problem is modeled as a Stackelberg game~\cite{li2017review}, where a \textbf{decision maker} defines a classification function \(f: \mathbb{R}^d \to \{-1,1\}\), and \textbf{decision subjects} (agents) strategically manipulate their features from \(\mathbf{x}\) to \(\mathbf{x}'\) at a cost \(c: \mathcal{X} \times \mathcal{X} \to \mathbb{R}_{\geq 0}\)~\cite{hardt2016strategic,miller2020strategic}. 
The optimal manipulated feature \(\mathbf{x}'\) is determined by the best-response function \(b_R(\mathbf{x})\):
\begin{definition}[Rational Strategic Manipulation]
The optimal modified feature vector \(\mathbf{x}'\) is determined by:
    \begin{equation}
    \mathbf{x}' = b_R(\mathbf{x}) = \arg \max_{\lvxp{\mathbf{x}'\in\mathcal{X}}} \left[ f(x') - \lambda c(x, x') \right],
    \label{bestresponse} 
    \end{equation}
where \(f(x') \in \{-1 ,1\}\) is the classification result after modification, \(c(x, x')\) is the manipulation cost, \(\lambda > 0\) is a trade-off parameter, and \(\mathcal{X}\) is the feature space. Usually, the cost of manipulation is modeled as the Mahalanobis distance~\cite{gavish2022optimal,chen2023learning}.
\end{definition}

From the decision maker’s perspective, the classification rule $f$ is designed to remain robust under such strategic manipulation:

\begin{definition}[Decision Optimization]
To mitigate manipulation, the decision maker optimizes $f$ to maximize expected accuracy against strategic manipulation:
\begin{equation}
    f^* \in \arg\max_{f \in \mathcal{F}} \mathbb{E}_{(\mathbf{x},y)\sim \mathcal{D}}\bigl[\bbold{1}\bigl(f(b_R(\mathbf{x})) = y\bigr)\bigr],
    \label{classifier}
\end{equation}
where $\mathcal{F}$ is the set of all feasible classification rules, $\bbold{1}$ denotes the indicator function, and $y$ is the observed label.
\end{definition}

\subsection{The Achilles' Heel of Strategic Classification}
\label{sec:deviation}

The rational-agent assumption underlying Eq.~\eqref{bestresponse} and Eq.~\eqref{classifier} is mathematically convenient but behaviorally restrictive.
Concretely, a large body of work in behavioral economics and psychology shows that human decision making systematically departs from perfect rationality~\cite{tversky1992advances,ariely2008predictably,kahneman2013prospect,borkar2021prospect}. 
Among these, \textbf{Prospect Theory}~\cite{tversky1992advances,kahneman2013prospect} provides a unifying framework, capturing how individuals evaluate uncertain outcomes. Accordingly, we focus on three particularly relevant deviations from rationality that serve as the foundation for our behaviorally realistic formulation:

\begin{itemize}[leftmargin=*, itemsep=0pt, topsep=0pt]
    \item \textbf{Loss aversion in manipulation.} Agents subjectively inflate perceived effort (e.g., manipulation cost) relative to gains.  Consequently, agents may give up manipulations that would be beneficial under a rational-utility model because the perceived loss outweighs the perceived benefit.
    \item \textbf{Reference bias.} Agents evaluate outcomes relative to a subjective reference point~(e.g., an expected outcome) rather than an absolute term. This means the utility of a successful manipulation varies across individuals, violating the rational-utility assumption commonly used in the classical SC paradigm.
    \item \textbf{Probability distortion.}  Agents may distort perceived acceptance probabilities, e.g., overweighting small probabilities. As a result, they can misjudge how much manipulation is needed to cross the decision threshold, leading to sub-optimal or insufficient adjustments.
\end{itemize}

\begin{figure*}[t] 
    \centering 
    \subfloat[Over-defense]{\includegraphics[width=0.38\textwidth]{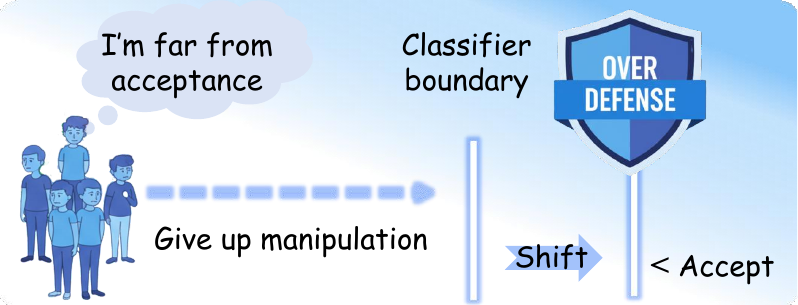}\label{fig3a}}
    \hfill
    \subfloat[Under-defense]{\includegraphics[width=0.38\textwidth]{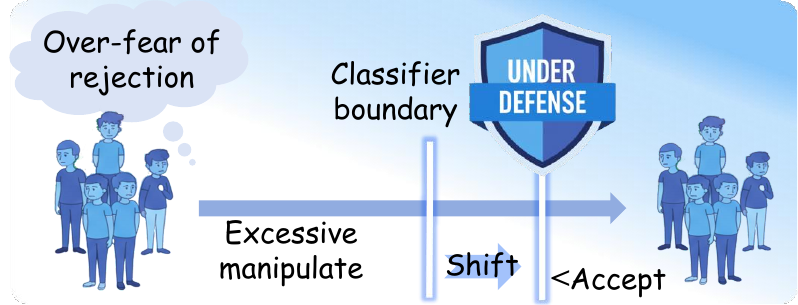}\label{fig3b}}
    \hfill 
    \subfloat[Effect of Deviations]{\includegraphics[width=0.22\textwidth]{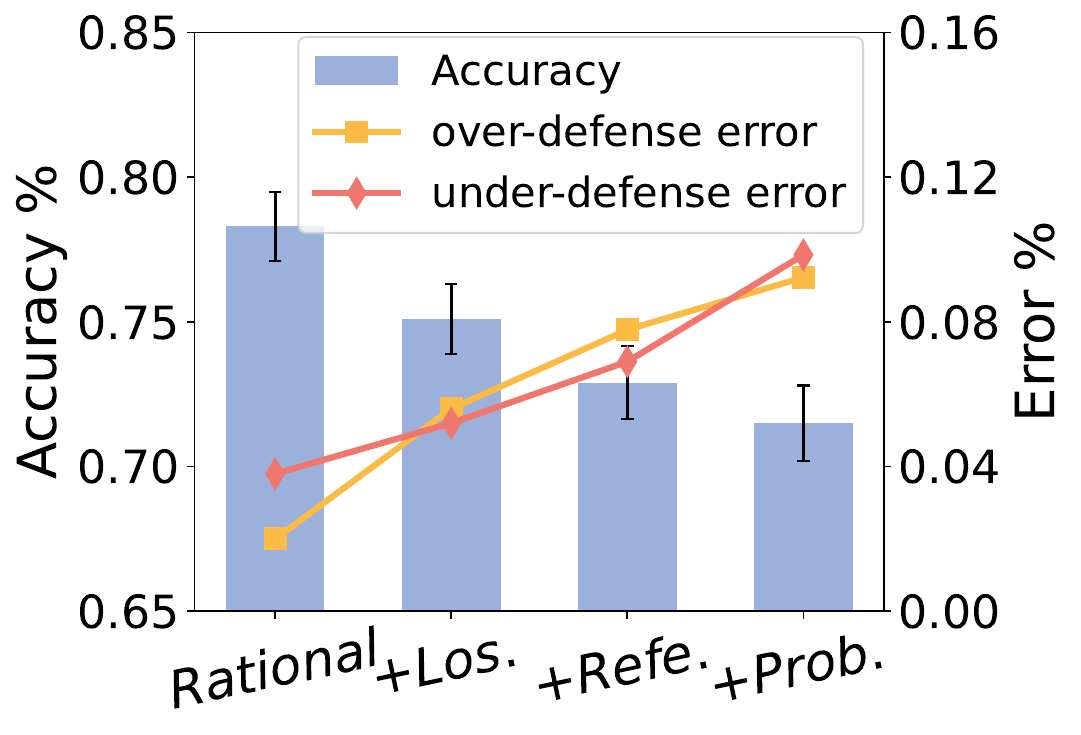}\label{fig3c}}
    \caption{Illustration of two failure modes induced by the rational-agent assumption. (a) Over-defense caused by agents giving up manipulation. (b) Under-defense caused by excessive manipulation. (c) Effects of cumulative behavioral deviations on a rational-based classifier~(\textit{Los.} = loss aversion, \textit{Refe.} = reference bias, \textit{Prob.} = probability distortion).}
    \label{fig3}
\end{figure*}

\section{Theory: Behavioral Mismatch and Failure Modes}
\label{sec:mismatch}

We now provide a theoretical analysis of why classifiers trained under the rational-agent assumption can systematically fail in practice due to behavioral mismatch. We then formalize two characteristic failure modes: \emph{over-defense} and \emph{under-defense}.

\subsection{Behavioral Mismatch and Deployment Bias}
\label{sec:response_mismatch}

Without loss of generality, the agent's strategic manipulation can be summarized by a behavioral response function \(b_\theta(\cdot)\) induced by a utility \(U_\theta(x,x')\):
\begin{equation}
    b_\theta(x) \in \arg\max_{x'\in\mathcal{X}} U_\theta(x,x'),
\end{equation}
where $\theta$ denotes a general behavioral parameterization.
Let $(X,Y)\sim\mathcal{D}$ and $X' = b_\theta(X)$; the post-manipulation distribution is then defined as $\mathcal{D}_\theta := \mathrm{Law}(X',Y)$, which differs from the original feature distribution.

Classical strategic classification assumes that agents respond by maximizing a rational utility, yielding a rational response $b_R(x)$ and a corresponding distribution $\mathcal{D}_R := \mathrm{Law}(b_R(X),Y)$.
This creates a \textbf{behavioral mismatch}: the classifier is trained on the post-manipulation distribution \(\mathcal{D}_R\) induced by \(b_R\), but is deployed under the distribution \(\mathcal{D}_\theta\) induced by \(b_\theta\).
Therefore, we have the following proposition, with a detailed proof in Appendix~\ref{App:MismatchProof}.

\begin{proposition}[Irreducible deployment bias under behavioral mismatch]
\label{prop:mismatch_delta}
Let $f_R$ denote a classifier optimized on $\mathcal{D}_R$, and define the deployment error $\delta(\theta) := \mathbb{E}_{(x,y)\sim \mathcal{D}_\theta}[\mathcal{L}(f_R(x),y)]$.
Suppose $\mathcal{D}_\theta \neq \mathcal{D}_R$, and that this mismatch affects a non-zero mass of samples near the decision boundary, i.e., there exist $\epsilon > 0$ and $\gamma > 0$ such that
\begin{equation}
\mathbb{P}_{(x,y)\sim\mathcal{D}_\theta}\!\Big(
|s(x)-\tau| \le \gamma,\; y \neq \mathbf{1}\{s(x)\ge\tau\}
\Big) \;\ge\; \epsilon.
\end{equation}
Then $\delta(\theta) > 0$, i.e., a non-vanishing deployment error persists under the rational-agent assumption.
\end{proposition}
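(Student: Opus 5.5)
The plan is to make implicit objects explicit and then lower-bound the deployment error by the mass in the displayed condition. The statement uses a score $s$ and threshold $\tau$ without defining them, so first I fix the interpretation: $f_R$ is a threshold classifier $f_R(x)=\mathbf{1}\{s(x)\ge\tau\}$ (labels encoded in $\{0,1\}$, or via the obvious sign map for $\{-1,1\}$). I also take $\mathcal{L}$ to be a nonnegative loss that is bounded below by a constant multiple of the $0$-$1$ loss. Concretely, I assume there is $c_{\mathcal{L}}>0$ such that $\mathcal{L}(\hat y,y)\ge c_{\mathcal{L}}\,\mathbf{1}\{\hat y\neq y\}$; this holds with $c_{\mathcal{L}}=1$ for the $0$-$1$ loss of Eq.~\eqref{classifier}. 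Both conventions should be stated at the start of the proof, since they are what give the proposition content.

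The key step is a pointwise inequality. For every $(x,y)$,
\[
\mathcal{L}(f_R(x),y)\;\ge\; c_{\mathcal{L}}\,\mathbf{1}\{y\neq \mathbf{1}\{s(x)\ge\tau\}\}\;\ge\; c_{\mathcal{L}}\,\mathbf{1}\{|s(x)-\tau|\le\gamma,\; y\neq \mathbf{1}\{s(x)\ge\tau\}\}.
\]
The first inequality is the loss assumption. The second holds because we are intersecting with an additional event, and the loss is nonnegative. I then take expectations under $\mathcal{D}_\theta=\mathrm{Law}(b_\theta(X),Y)$; measurability of $b_\theta$, and hence of the events involved, is assumed. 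Monotonicity of expectation gives
\[
\delta(\theta)\ge c_{\mathcal{L}}\,\mathbb{P}_{\mathcal{D}_\theta}(\cdots)\ge c_{\mathcal{L}}\epsilon>0.
\]
This proves the claim and yields the quantitative bound $\delta(\theta)\ge c_{\mathcal{L}}\epsilon$. I would record that bound, because it makes "non-vanishing" precise: it is uniform over any family of $\theta$ satisfying the hypothesis with a common $\epsilon$.

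The hypothesis $\mathcal{D}_\theta\neq\mathcal{D}_R$ is not logically needed once the $\epsilon$-mass condition is assumed. I would add a remark explaining its role. Under $\mathcal{D}_R$, optimality of $f_R$ may push misclassified near-boundary mass toward its minimum. Under $\mathcal{D}_\theta$, the mismatch in $b_\theta$ relocates points across or near $\tau$ in ways $f_R$ did not anticipate. Examples are agents who abstain from manipulation under loss aversion, or who overshoot under probability distortion. This relocation is exactly what produces the $\epsilon$ mass. I would also note that the argument interprets $\delta(\theta)>0$ as strictly positive deployment error, with no claim about excess error relative to $\mathcal{D}_R$.

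The main obstacle is interpretive rather than technical: the statement leaves $s$, $\tau$ and $\mathcal{L}$ unspecified. The step most likely to need care is justifying the lower bound $\mathcal{L}\ge c_{\mathcal{L}}\cdot$($0$-$1$ loss) for surrogate losses. For margin-based surrogates such as the hinge or logistic loss evaluated at the score, I would check this using the fact that these losses are at least a positive constant whenever the sign is wrong (for example, $\log 2$ for the logistic loss). The window $|s-\tau|\le\gamma$ then ensures the bound holds near the boundary even if the surrogate is evaluated on $s(x)-\tau$.
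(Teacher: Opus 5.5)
Your proposal is correct and is essentially the paper's proof. The paper likewise takes $f_R(x)=\mathbf{1}\{s(x)\ge\tau\}$ with the $0$--$1$ loss, notes that the loss equals one on the boundary-disagreement set $\mathcal{A}$, and concludes $\delta(\theta)\ge\mathbb{P}_{\mathcal{D}_\theta}(\mathcal{A})\ge\epsilon$; your constant $c_{\mathcal{L}}$ is only a mild generalization, and, as you say, neither argument uses $\mathcal{D}_\theta\neq\mathcal{D}_R$ (nor, in fact, the $\gamma$-window, so your closing claim that the window secures the surrogate-loss bound is unnecessary, since margin losses already exceed a positive constant whenever the sign is wrong).
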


Next, we show that behavioral mismatch manifests through two \emph{directional} and practically failure modes: \emph{over-defense} and \emph{under-defense}.

\subsection{Over-defense}

Over-defense arises when a classifier trained under the rational-agent assumption anticipates manipulations that do not occur in practice~(Fig.~\ref{fig3a}).
Agents who perceive themselves as far from acceptance due to loss aversion or reference bias, may choose not to manipulate, even when manipulation would be optimal under a rational utility.

\begin{definition}[Over-defense]
Let $f_R$ denote a classifier trained under the rational-agent assumption.
$f_R$ exhibits \emph{over-defense} if it adjusts its decision rule to defend against strategic manipulations that are unlikely to occur at deployment, leading to degraded classification performance.
\end{definition}

\begin{example}[Over-defense in a linear classifier]
Consider a linear classifier $f(x)=\mathrm{sign}(w^\top x-\tau)$.
Under the rational-agent assumption, it is trained to defend against agents who minimally manipulate their features to cross the decision boundary.
If, in practice, some agents do not manipulate at all, this anticipation causes the learned threshold to shift from $\tau$ to $\tau'>\tau$.
Consequently, some instances that satisfy $w^\top x \ge \tau$—and would be accepted without any manipulation—are now rejected:
\begin{equation}
    w^\top x \ge \tau \quad \text{but} \quad w^\top x < \tau' \;\Rightarrow\; f_R'(x)=-1.
\end{equation}
This illustrates how defending against nonexistent manipulations induces false negatives.
\end{example}

Over-defense shifts the decision boundary to counter manipulations that do not occur in practice, thereby converting genuinely positive instances into false negatives.
We formalize the resulting accuracy degradation in the following proposition~(see proof in Appendix~\ref{AppB}).

\begin{proposition}[Accuracy Degradation from Over-defense]
\label{pro2}
Let $f_R$ be a classifier trained under the rational-agent assumption.
Suppose there exists a non-negligible set of agents who abstain from manipulation due to behavioral biases, even though rational agents at the same features would manipulate.
If the over-defense-induced boundary shift is sufficiently large relative to the 
residual training error on this abstention set, then
\begin{equation}
\mathbb{E}[\mathcal{L}(f_R(x),y)] \;>\; \mathbb{E}[\mathcal{L}(f_R(b_R(x)),y)],
\end{equation}
where $\mathcal{L}$ is the loss function for classification.
\end{proposition}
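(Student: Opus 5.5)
I will prove Proposition~\ref{pro2} by a direct decomposition of the expected loss over a partition of the agent population, taking $\mathcal{L}$ to be the $0$--$1$ loss, as in Eq.~\eqref{classifier}. The left-hand side $\mathbb{E}[\mathcal{L}(f_R(x),y)]$ is evaluated at deployment, where abstaining agents present their original features $x$. The right-hand side $\mathbb{E}[\mathcal{L}(f_R(b_R(x)),y)]$ is the training-time error under rational responses.

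\textbf{Step 1: partition the population.} Let $A$ be the abstention set: agents with $b_R(x)\neq x$ who, under the behavioral response, stay at $x$. Let $A^c$ be its complement. I will assume, as the hypothesis implicitly allows, that on $A^c$ the behavioral and rational responses produce the same classification outcome. The two expectations then agree on $A^c$, and the difference reduces to
\begin{equation}
\Delta := \mathbb{P}(A)\Big(\mathbb{E}[\mathcal{L}(f_R(x),y)\mid A]-\mathbb{E}[\mathcal{L}(f_R(b_R(x)),y)\mid A]\Big).
\end{equation}
By the non-negligibility hypothesis, $\mathbb{P}(A)>0$, so it remains to show the bracket is positive.

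\textbf{Step 2: locate the false negatives.} I will use the threshold picture of the linear example, $f_R(x)=\mathrm{sign}(w^\top x-\tau')$ with $\tau'>\tau$. On $A$, rational agents would have moved to $b_R(x)$ with $w^\top b_R(x)\ge \tau'$. Abstainers instead remain at $x$ with $w^\top x<\tau'$, so they are rejected. Let $A_+=\{x\in A: y=1\}$, which includes the agents with $w^\top x\in[\tau,\tau')$. Each point of $A_+$ is classified correctly at training time and incorrectly at deployment, so it contributes $+1$ to the bracket. Abstainers with $y=-1$ are now correctly rejected where they may previously have been wrongly accepted, so they contribute at least $-1$.

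\textbf{Step 3: compare the two masses.} The bracket is therefore bounded below by $\mathbb{P}(A_+\mid A)-\epsilon_A$. Here $\epsilon_A$ is the residual training error on $A$, meaning the mass of negatives in $A$ that $f_R\circ b_R$ misclassified. The mass $\mathbb{P}(A_+\mid A)$ grows with the shift $\tau'-\tau$, through the probability mass of positives in the band $[\tau,\tau')$. I will formalize the hypothesis ``boundary shift sufficiently large relative to the residual training error'' exactly as $\mathbb{P}(A_+\mid A)>\epsilon_A$. That yields $\Delta>0$.

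\textbf{Main obstacle.} The hard part is making Step 1 and Step 3 rigorous without overclaiming. One issue is justifying that non-abstainers contribute no difference; otherwise an extra term must be absorbed into the ``sufficiently large'' condition. The other is translating the informal shift condition into the mass inequality. I would handle both by stating them explicitly as the quantitative form of the proposition's hypotheses, rather than trying to derive them from the optimality of $f_R$ in Eq.~\eqref{classifier}.
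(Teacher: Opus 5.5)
Your proof is correct, but Steps 2--3 take a different route from the paper. The paper never uses the structure of $b_R$. It writes the loss difference on $A$ as $\Pr(f_R(X)\neq Y,\ f_R(b_R(X))=Y\mid A)-\Pr(f_R(X)=Y,\ f_R(b_R(X))\neq Y\mid A)$. Two set inclusions then bound this below by $\tau_A-2\varepsilon_R$, where $\tau_A=\Pr(f_R(X)\neq f_R(b_R(X))\mid A)$ is a disagreement rate and $\varepsilon_R=\Pr(f_R(b_R(X))\neq Y\mid A)$ is the residual training error. Its hypothesis is therefore $\tau_A>2\varepsilon_R$.

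You instead split $A$ by label and exploit rationality. A rational agent moves only if it is rejected at $x$ and accepted at $b_R(x)$, so every point of $A$ has $f_R(x)=-1$ and $f_R(b_R(x))=1$. This holds for any $f_R$ under Eq.~\eqref{bestresponse} whenever $c(x,x)=0<c(x,x')$ for $x'\neq x$, so you do not need the linear threshold picture at all. Positives in $A$ then contribute exactly $+1$ and negatives exactly $-1$. The bracket therefore equals $\Pr(Y=1\mid A)-\Pr(Y=-1\mid A)$ exactly, not merely as a lower bound. Your condition $\mathbb{P}(A_+\mid A)>\epsilon_A$ just says abstainers are majority-positive. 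This coincides with the paper's condition, since in this setting $\tau_A\equiv 1$ and $\varepsilon_R=\Pr(Y=-1\mid A)$.

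What each approach buys: the paper's inclusion argument holds for any classifier and any map $b_R$, for example approximate best responses or responses computed against a different classifier. The price is a factor-$2$ slack and a ``boundary-shift'' quantity that is trivially $1$ in the exact-best-response case. Your argument gives an exact, interpretable identity instead.

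Two small remarks. First, your aside that $A_+$ ``includes the agents with $w^\top x\in[\tau,\tau')$'' is not true as stated, since such agents need not abstain or be positive, but you never use it. Second, your explicit assumption that $A^c$ contributes nothing is precisely what the paper's step ``adding the contribution from $X\notin A$'' tacitly requires. Stating it openly is an improvement, not a weakness.
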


\subsection{Under-defense}

Under-defense occurs when agents manipulate more aggressively than anticipated by a classifier trained under the rational-agent assumption~(Fig.~\ref{fig3b}).
Behaviorally, probability distortion may cause agents to overestimate their chances of acceptance, while loss aversion amplifies the fear of rejection, leading them to overshoot the manipulation level predicted by rational utility maximization.

\begin{definition}[Under-defense]
A classifier $f_R$ with the rational-agent assumption exhibits \emph{under-defense} if it defends only against rational manipulations, while agents in practice make larger adjustments that exceed this defense, resulting in degraded classification performance.
\end{definition}

\begin{example}[Under-defense in a linear classifier]
Consider a linear classifier $f(x)=\mathrm{sign}(w^\top x-\tau)$.
Under rational assumptions, agents are expected to manipulate minimally to the decision boundary.
If, in practice, some agents overshoot this endpoint, a classifier trained only against rational manipulations fails to correctly classify these instances:
\begin{equation}
    f_R(b_R(x))=y
\quad\text{but}\quad
f_R(x^\ast)\neq y.
\end{equation}
\end{example}

Under-defense leaves parts of the manipulated feature space unprotected, allowing strategic agents to bypass the intended defense. We formalize the resulting performance degradation in the following proposition~(see proof in Appendix~\ref{AppC}).

\begin{proposition}[Accuracy Degradation from Under-defense]
\label{pro3}
Let $f_R$ be a classifier trained under the rational-agent assumption.
Suppose there exists a non-negligible set of agents who overshoot the rational manipulation endpoint due to behavioral biases, landing in regions where $f_R$ provides no defense.
If the overshoot-induced disagreement is sufficiently large relative to the residual training error on this overshoot set, then
\begin{equation}
\mathbb{E}\big[\mathcal{L}(f_R(b_\theta(x)),y)\big]
\;>\;
\mathbb{E}\big[\mathcal{L}(f_R(b_R(x)),y)\big],
\end{equation}
where $b_R(x)$ denotes the rational manipulation endpoint and $b_\theta(x)$ the 
actual manipulated feature under behavioral biases.
\end{proposition}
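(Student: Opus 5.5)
The plan is to prove Proposition~\ref{pro3} by a decomposition argument that mirrors the one for Proposition~\ref{pro2}. We write $\mathcal{L}$ as the 0--1 loss, or any loss that is bounded and takes value $0$ on correct predictions and $1$ on errors. The population then splits into two measurable sets: the overshoot set $S := \{x : b_\theta(x)\neq b_R(x),\ b_\theta(x) \text{ overshoots } b_R(x)\}$, and its complement $S^c$. On $S^c$ the behavioral and rational endpoints coincide, so $\mathcal{L}(f_R(b_\theta(x)),y)=\mathcal{L}(f_R(b_R(x)),y)$ pointwise. This cancels exactly in the difference of expectations, giving
\begin{equation*}
\Delta := \mathbb{E}\big[\mathcal{L}(f_R(b_\theta(x)),y)\big]-\mathbb{E}\big[\mathcal{L}(f_R(b_R(x)),y)\big]
= \mathbb{E}\big[(\mathcal{L}(f_R(b_\theta(x)),y)-\mathcal{L}(f_R(b_R(x)),y))\,\mathbf{1}\{x\in S\}\big].
\end{equation*}

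Next we bound the two terms on $S$ separately. For the rational term, we define the residual training error on the overshoot set as $r := \mathbb{E}[\mathcal{L}(f_R(b_R(x)),y)\mathbf{1}\{x\in S\}]$. This is the error $f_R$ still makes even against the manipulation it was trained to anticipate.

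For the behavioral term, we formalize ``landing in regions where $f_R$ provides no defense'' as follows. Let $A := \{x\in S : f_R(b_\theta(x))\neq f_R(b_R(x))\}$ be the disagreement set. On $A$, whenever $f_R(b_R(x))=y$ we must have $f_R(b_\theta(x))\neq y$. This is exactly the situation in the linear-classifier example, where $f_R(b_R(x))=y$ but $f_R(x^\ast)\neq y$. Hence
\begin{equation*}
\mathbb{E}\big[\mathcal{L}(f_R(b_\theta(x)),y)\mathbf{1}\{x\in S\}\big]\ \ge\ \mathbb{P}(A)-\mathbb{P}\big(A\cap\{f_R(b_R(x))\neq y\}\big)\ \ge\ \mathbb{P}(A)-r.
\end{equation*}

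Combining the two bounds gives $\Delta \ge \mathbb{P}(A) - 2r$. The hypothesis ``the overshoot-induced disagreement is sufficiently large relative to the residual training error'' is then made precise as $\mathbb{P}(A) > 2r$. This yields $\Delta>0$, which is the claimed strict inequality.

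A sharper version is also available. Using $\mathbf{1}\{f_R(b_\theta)\neq y\}-\mathbf{1}\{f_R(b_R)\neq y\} = \pm\mathbf{1}_A$, we get $\Delta = \mathbb{P}(A\cap\{f_R(b_R)=y\})-\mathbb{P}(A\cap\{f_R(b_R)\neq y\})$. Under this identity the condition can be weakened to $\mathbb{P}(A)>2\,\mathbb{P}(A\cap\{f_R(b_R)\neq y\})$. I would present this sharper form and remark that $\mathbb{P}(A)>2r$ is sufficient for it.

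The main obstacle is not the algebra but pinning down the informal hypotheses. ``Non-negligible'' becomes $\mathbb{P}(S)>0$, and ``no defense'' must be turned into the disagreement event $A$ with positive mass. For a general (non-0--1) loss, I would assume $\mathcal{L}\in[0,1]$ with a positive gap between correct and incorrect predictions, and carry that gap constant through the same bounds.
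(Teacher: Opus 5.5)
Your argument is the paper's proof up to normalization. The paper conditions on the overshoot set $B$ and works with $\tau_B=\Pr(f_R(b_\theta(X))\neq f_R(b_R(X))\mid X\in B)$ and $\varepsilon_R=\Pr(f_R(b_R(X))\neq Y\mid X\in B)$. Your $\mathbb{P}(A)$ equals $P(B)\tau_B$ and your $r$ equals $P(B)\varepsilon_R$, so your bound $\Delta\ge \mathbb{P}(A)-2r$ is exactly its $P(B)(\tau_B-2\varepsilon_R)$, obtained from the same two set inclusions. Your sharper identity is a genuine, if small, improvement. For binary labels the paper's inclusion $\{f_R(b_\theta)=Y,\ f_R(b_R)\neq Y\}\subseteq\{f_R(b_R)\neq Y\}$ double-counts. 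Your identity instead shows that the overshoot-set contribution is positive if and only if $\Pr(f_R(b_R(X))\neq Y\mid X\in A)<1/2$.

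The step that fails as written is ``On $S^c$ the behavioral and rational endpoints coincide.'' This does not follow from your definition of $S$. The complement contains every $x$ with $b_\theta(x)\neq b_R(x)$ that does not overshoot, such as abstainers and undershooters, which is precisely the over-defense population of Proposition~\ref{pro2}. On such points the loss difference can be $-1$. Take a negative agent whose rational endpoint is wrongly accepted by $f_R$ (part of the residual error that rational agents of both labels create at the boundary), but who abstains and is correctly rejected at $x$. These errors are not counted in $r$, since they lie outside $S$. The overshoot-set gain never exceeds $\mathbb{P}(A)$, so with enough such mass (for instance at least $\mathbb{P}(A)$) you get $\Delta\le 0$ while $\mathbb{P}(A)>2r$ still holds. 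The strict inequality is therefore not implied by the hypotheses as you formalized them. You need an explicit extra assumption: either overshooting is the only deviation ($b_\theta=b_R$ off $S$), or, more weakly, $\mathbb{E}\big[(\mathcal{L}(f_R(b_\theta(x)),y)-\mathcal{L}(f_R(b_R(x)),y))\mathbf{1}\{x\notin S\}\big]\ge 0$. The paper's proof makes the same move silently when it ``adds the contribution from $X\notin B$.'' Once you state this hypothesis, your argument is complete and, in this respect, more explicit than the original.
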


As shown in Fig.~\ref{fig3c}, as agents are increasingly influenced by psychological biases, the performance of a classifier trained under the rational-agent assumption deteriorates.

\begin{figure*}[t]
\vskip -0.03in
    \centering
    \includegraphics[width=0.96\linewidth]{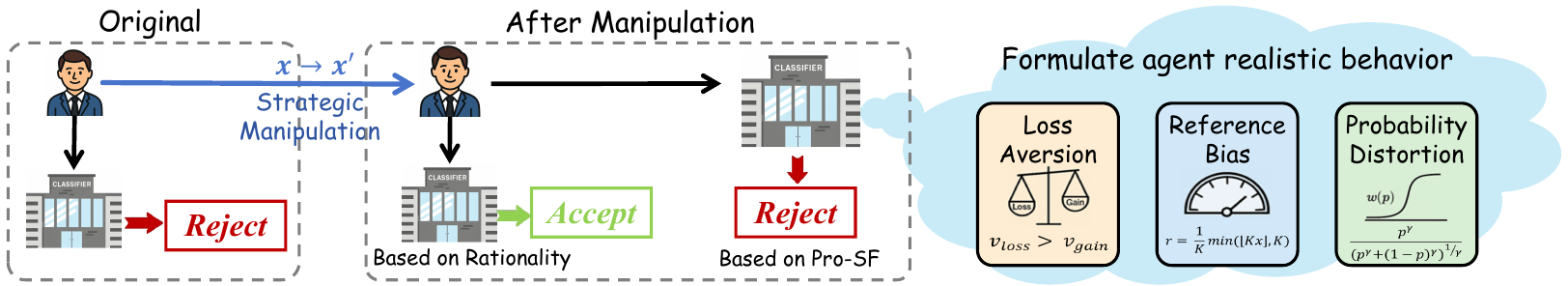}
    \caption{From rational to behaviorally realistic modeling: Pro-SF reformulates agent realistic behavior and provides robust outcomes.}
    \label{fig5}
\end{figure*}

\section{Prospect-Guided Strategic Framework}
\label{sec:method}
\subsection{Problem Formulation}

To better reflect real-world strategic behavior, we formalize a practical strategic classification setting, termed \textbf{behaviorally realistic strategic classification problem}~\textit{(BR-SC)}.
Unlike classical strategic classification, which assumes agents follow rational best-response manipulations, BR-SC allows the manipulations of agents to be influenced by psychological biases.

\begin{problem}[Behaviorally Realistic Strategic Classification (BR-SC)]
The BR-SC problem consists of two coupled components:
\begin{itemize}[leftmargin=*, itemsep=0pt, topsep=0pt]
    \item \textbf{Behavioral manipulation model.} 
    Given a data distribution $D$ over $(x,y)$, a cost function $c$, and a classifier $f$, agents are modeled by a behaviorally realistic manipulation function $b_\theta(x)$.
    \item \textbf{Classifier learning under behavioral responses.} 
    Based on the behavioral manipulation model $b_\theta(x)$, the decision maker aims to train a classifier $f^\ast \in \mathcal{F}$ that performs robustly under behaviorally biased strategic manipulations.
\end{itemize}
\end{problem}

\begin{example}[loan approval]
    Consider a loan approval system, in the BR-SC problem, applicants may still act strategically, but their decisions are often influenced by psychological biases, such as loss aversion, subjective reference points, and distorted beliefs about approval probability. As a result, strategic responses in real-world deployment systematically deviate from idealized rational best responses, giving rise to a more realistic strategic classification problem.
\end{example}

To address the \textit{BR-SC} problem, we introduce the \textbf{Prospect-Guided Strategic Framework}~\textit{(Pro-SF)}, which leverages prospect theory to capture realistic agent manipulations and reframe the learning objective for the classifier.

\subsection{Modeling Prospect-Guided Utility for Agents}
We specify an explicit utility-based model for agents’ strategic behavior.
In particular, we adopt a prospect-guided utility that incorporates three irrational mechanisms: loss aversion, probability distortion, and reference bias.

\textbf{Loss aversion in manipulation.}  
To capture loss aversion in agents’ manipulations, we adopt a prospect-style value function that evaluates perceived gains and losses on asymmetric scales.
Specifically, given a candidate manipulation $x$, the agent’s subjective value is defined as
\begin{equation}
    v(x) \;=\; v_{\text{gain}}(x)^{\alpha} \;-\; \kappa \,(v_{\text{loss}}(x))^{\beta},
\label{loss}
\end{equation}
where $v_{\text{gain}}(x)$ and $v_{\text{loss}}(x)$ denote the positive and negative components of the perceived outcome, respectively.
The parameters $\alpha,\beta \in (0,1]$ encode diminishing sensitivity, while $\kappa > 1$ amplifies the perceived weight of losses.

This value function introduces an explicit asymmetry between gains and losses.
\begin{examplebox}
For example, when $\kappa=1.25$, a perceived gain of $9$ and a perceived loss of $8$ are evaluated asymmetrically as $9$ versus $10$ (i.e., $\kappa\cdot 8=10$), causing the loss component to dominate the decision.
\end{examplebox}

\textbf{Probability distortion.}  
Agents often distort probabilities rather than evaluating them objectively, overweighting small chances of success and underweighting near certainties.

Let $p\in[0,1]$ denote the objective probability of acceptance.
The agent's subjective probability $w(p)$ is modeled via the probability weighting function:
\begin{equation}
    w(p) \;=\; \frac{p^\gamma}{\big(p^\gamma + (1-p)^\gamma\big)^{1/\gamma}}, 
    \quad \gamma \in (0,1],
\label{Probability}
\end{equation}
where $\gamma$ controls the curvature: smaller values yield a more pronounced inverse-$S$ shape, leading to stronger overweighting of small probabilities and underweighting of large ones.

\begin{examplebox}
For example, with $\gamma=0.6$, a small objective probability $p=0.05$ is overweighted (i.e., $w(p)= 0.2 >p$), while a large objective probability $p=0.8$ is underweighted (i.e., $w(p) = 0.6<p$).
\end{examplebox}

\begin{remark}
Eq.~\eqref{Probability} follows standard probability-weighting functions used in prospect theory, which produce an inverse-$S$ shape probability weighting~\cite{kahneman2013prospect,barberis2016prospect,gonzalez1999shape}.
\end{remark}

\textbf{Reference bias.}  
Without loss of generality, let $s(x)\in[0,1]$ denote the system-defined likelihood of acceptance, where $s(x)=1$ corresponds to meeting the passing threshold.
Rather than evaluating $s(x)$ precisely, agents often form a coarse-grained subjective reference point $r$ that reflects limited sensitivity in self-assessment.
We model this reference point via a quantization operator:
\begin{equation}
  r \;=\; \frac{1}{K}\Big\lfloor K\,s(x)\Big\rfloor,
  \label{refer}
\end{equation}
where $K$ sets the step size $1/K$ (yielding $K{+}1$ discrete levels), mapping $r$ onto equally spaced discrete levels (e.g., $K=5$ gives $r \in \{0,0.2,0.4,0.6,0.8,1.0\}$).

\begin{examplebox}
For example, a student after an exam may only estimate their score within a rough range (e.g., ``around $70$--$80$'') rather than a precise value (e.g., $73$).
\end{examplebox}

\textbf{Unified Prospect-Based Utility.}  
We now unify the three behavioral components into a single prospect-based utility that governs agents’ manipulation choices.  

\begin{definition}[Prospect-based utility in strategic manipulation]
\label{def:prospect-utility}
For an agent with subjective reference point $r\in[0,1]$, the prospect-based utility $U_{\mathrm{P}}$ for manipulating features from $x$ to $x'$ is
\begin{equation}
\begin{aligned}
    U_{\mathrm{P}}(x,x';\phi) &= w^{+}\big(p(x')\big)\big(1-r\big)^{\alpha} \\
    &-\;\kappa\,w^{-}\!\big(1-p(x')\big)\,r^{\beta}- \kappa\cdot(\lambda c(x,x') ^{\beta}),
\end{aligned}
\label{eq:cpt-binary-utility}
\end{equation}
where $p(x')\in[0,1]$ denotes the \emph{model-implied} probability of being classified as positive after manipulation, and $\phi=\{\alpha,\beta,\kappa,\gamma\}$ denotes the Prospect-Theoretic parameter set.

The parameters $\alpha,\beta\in(0,1]$ capture diminishing sensitivity, $\kappa>1$ encodes loss aversion, $\lambda>0$ controls the strength of manipulation cost, and $w^{+},w^{-}$ are probability-weighting functions that distort objective probabilities into subjective decision weights.
\end{definition}
\begin{remark}
We treat the manipulation cost $c(x,x')$ as an effort expenditure that contributes to the agent’s subjective loss, consistent with the loss‐aversion principle in prospect theory~\cite{passarelli2020prospect}. In special cases where effort is treated as an objective friction rather than a psychologically salient loss, the cost term can be simplified to $-\lambda\,c(x,x')^{\beta}$.
\end{remark}

As illustrated in Fig.~\ref{fig5}, Pro-SF reformulates agents’ strategic manipulation by jointly accounting for loss aversion, reference dependence, and probability distortion.
 
\begin{algorithm}[h]
\centering
\caption{Prospect-Guided Strategic Framework}
\label{alg:prosf}
\begin{algorithmic}[1]
\REQUIRE Dataset $\mathcal{D}=\{(x_i,y_i)\}_{i=1}^n$; a classifier $f \in \mathcal{F}$; manipulation cost $c(\cdot)$
\STATE \textbf{Parameters}: $\alpha,\beta \!\in\! (0,1]$, $\kappa\!>\!1$, $\gamma\!\in\!(0,1]$, $K\!\in\!\mathbb{N}$
\STATE Initialize classifier $f \in \mathcal{F}$
\STATE \textbf{Perceive agents’ manipulation with prospect-guided utility~(Eq.~\eqref{eq:cpt-binary-utility}):}
\STATE \quad Anticipate \emph{loss aversion} with Eq.~\eqref{loss}
\STATE \quad Anticipate \emph{probability distortion} with Eq.~\eqref{Probability}
\STATE \quad Anticipate \emph{reference bias} with Eq.~\eqref{refer}
\STATE \textbf{Train classifier against manipulated data:}
\STATE \quad Construct $\tilde{\mathcal{D}}=\{(b_{\mathrm{P}}(x_i),y_i)\}_{i=1}^n$ with Eq.~\eqref{eq:perceived-response}
\STATE \quad Learn $f^* \in \mathop{\arg\min}\limits_{f \in \mathcal{F}}
\mathbb{E}_{(x,y)\sim\mathcal{D}}
\big[\, \mathcal{L}(f(b_{\mathrm{P}}(\mathbf{x})), y)\,\big]$
\STATE \textbf{Return} $f^\star$
\end{algorithmic}
\end{algorithm}

\begin{remark}[Information structure of agents]
In our setting, agents do not have direct access to the current classifier $f_\theta$. Instead, they observe historical outcomes and use past feedback to form a subjective belief about their likelihood of acceptance, captured by $p(x') \in [0,1]$ in Eq.~\eqref{eq:cpt-binary-utility}.
This belief is further distorted by psychological biases rather than computed directly from $f_\theta$, positioning our setting between the full-information 
assumption~\citep{hardt2016strategic} and a no-information extreme.
\end{remark}

\subsection{Integrating Pro-SF into Strategic Classification}
We finally specify the strategic-response model and the learning objective under Pro-SF. The whole process of Pro-SF is illustrated in Algm.~\ref{alg:prosf}.

\begin{definition}[Prospect-guided Strategic Manipulation]
Given a classifier $f \in \mathcal{F}$ and data distribution $\mathcal{D}$, an agent strategically manipulates features in response to $f$:
\begin{equation}
    b_{\mathrm{P}}(\mathbf{x}) \in \arg\max_{\mathbf{x}'\in\mathcal{X}}
    U_{\mathrm{P}}(\mathbf{x},\mathbf{x}';\phi), 
    \label{eq:perceived-response}
\end{equation}
where $U_{\mathrm{P}}(\cdot;\phi)$ is the prospect-based utility defined in Eq.~\eqref{eq:cpt-binary-utility}.
\end{definition}

\begin{definition}[Learning objective of Pro-SF]
The decision maker trains a classifier $f$ that minimizes the expected classification loss under prospect-guided manipulation:
\begin{equation}
    f^* \in \arg\min_{f \in \mathcal{F}}
    \mathbb{E}_{(x,y)\sim\mathcal{D}}
    \big[\, \mathcal{L}(f(b_{\mathrm{P}}(\mathbf{x})), y)\,\big],
    \label{eq:outer}
\end{equation}
where $\mathcal{L}$ is a loss function for classification.
\end{definition}

\subsection{Equilibrium Analysis}

The Stackelberg game induced by Pro-SF inherits well-defined game-theoretic properties. Specifically, we establish three structural guarantees for the Pro-SF game:
\begin{itemize}[leftmargin=*, itemsep=0pt, topsep=0pt]
    \item A Stackelberg equilibrium always exists under standard compactness conditions on the action space and classifier parameter space.
    \item The equilibrium is locally stable, i.e., the best-response dynamics between the classifier and agents converge to the equilibrium from any sufficiently close initialization.
    \item The equilibrium is locally unique within its neighborhood of attraction.
\end{itemize}
Formal statements and proofs are provided in Appendix~\ref{app:existence}.

\begin{table*}[t]
\centering
\small
\caption{Performance~(\%) of rational-based models and Pro-SF models within different agent manipulation paradigms.}
\label{tab1}
\renewcommand{\arraystretch}{1.1} 
\begin{threeparttable}
\begin{tabular}{llcccccc}
\toprule
\multirow{2}{*}{\textbf{Classifier}} & \multirow{2}{*}{\textbf{Manipulation}} & \multicolumn{6}{c}{\textbf{Datasets}} \\ \cmidrule{3-8} 
                                      &                                       & \textit{Adult} & \textit{Credit} & \textit{Diabetes} & \textit{German} & \textit{Spam} & \textit{Synthetic} \\\midrule
\multirow{3}{*}{\textit{Rational-based}} 
& \textit{Rational} & $78.53_{\pm0.87}$ & $76.12_{\pm1.34}$ & $70.25_{\pm1.52}$ & $74.31_{\pm1.41}$ & $83.87_{\pm1.21}$ & $81.62_{\pm1.28}$ \\
& \textit{Non-rational} & $72.42_{\pm1.85}$ & $69.54_{\pm2.11}$ & $64.73_{\pm1.89}$ & $68.21_{\pm2.03}$ & $78.38_{\pm1.94}$ & $73.20_{\pm2.22}$ \\
& \textit{Mixed Behavior}    & $74.31_{\pm1.51}$ & $72.12_{\pm1.87}$ & $66.81_{\pm1.65}$ & $70.43_{\pm1.92}$ & $78.16_{\pm1.63}$ & $75.15_{\pm1.74}$ \\
\midrule
\multirow{3}{*}{\makecell[c]{\textbf{\textit{Pro-SF~(ours)}}}}
& \textit{Rational} & $75.31_{\pm0.93}$ & $77.02_{\pm1.25}$ & $71.23_{\pm1.43}$ & $75.18_{\pm1.38}$ & $82.92_{\pm1.18}$ & $80.41_{\pm1.12}$ \\
& \textit{Non-rational} & $\textbf{81.50}_{\pm1.23}$ & $\textbf{82.41}_{\pm1.37}$ & $\textbf{74.52}_{\pm1.26}$ & $\textbf{79.35}_{\pm1.42}$ & $\textbf{89.34}_{\pm1.27}$ & $\textbf{85.20}_{\pm1.31}$ \\
& \textit{Mixed Behavior}    & $\textbf{78.68}_{\pm1.77}$ & $\textbf{79.13}_{\pm1.63}$ & $\textbf{72.01}_{\pm1.58}$ & $\textbf{76.24}_{\pm1.71}$ & $\textbf{86.71}_{\pm1.52}$ & $\textbf{82.34}_{\pm1.56}$ \\
\bottomrule
\end{tabular}
\end{threeparttable}
\end{table*}

\begin{table*}[t]
    \centering
    \small
    \caption{Performance of our ablation study on behavioral mechanisms.}
    \label{tab2}
    \begin{threeparttable}
    \renewcommand{\arraystretch}{1.01}
    \begin{tabular}{lcccccc}
        \toprule[1pt]
        \multirow{2}{*}{\textbf{Classifier}} & \multicolumn{3}{c}{\textbf{Behavioral Factors}} & \multicolumn{3}{c}{\textbf{Metrics}} \\
        \cmidrule(lr){2-4} \cmidrule(lr){5-7}
        & \textit{Refe.} & \textit{Prob.} & \textit{Los.} & \textit{Accuracy} (\%) $\uparrow$& \textit{Over-defense error}~(\%)~$\downarrow$ & \textit{Under-defense error}~(\%)~$\downarrow$ \\
        \midrule
        $f_{Pro-sf}$ & \cmark & \cmark & \cmark & 78.92 $_{\pm0.13}$ & 5.17 $_{\pm0.11}$ & 3.22 $_{\pm0.09}$ \\
        \midrule
        $f_{Refe+Prob}$ & \cmark & \cmark & \xmark & 77.72 $_{\pm0.27}$ & 6.24 $_{\pm0.08}$ & 5.29 $_{\pm0.12}$ \\
        $f_{Refe+Los}$ & \cmark & \xmark & \cmark & 77.80 $_{\pm0.16}$ & 7.21 $_{\pm0.10}$ & 5.05 $_{\pm0.07}$ \\
        $f_{Prob+Los}$  & \xmark & \cmark & \cmark & 77.45 $_{\pm0.09}$ & 6.28 $_{\pm0.06}$ & 4.79 $_{\pm0.13}$ \\
        \midrule
        $f_{Refe}$    & \cmark & \xmark & \xmark & 75.33 $_{\pm0.11}$ & 9.12 $_{\pm0.09}$ & 7.27 $_{\pm0.10}$ \\
        $f_{Prob}$     & \xmark & \cmark & \xmark & 73.50 $_{\pm0.15}$ & 9.26 $_{\pm0.08}$ & 9.23 $_{\pm0.12}$ \\
        $f_{Los}$     & \xmark & \xmark & \cmark & 75.91 $_{\pm0.11}$ & 7.14 $_{\pm0.11}$ & 7.05 $_{\pm0.09}$ \\
        \bottomrule[1pt]
    \end{tabular}
    \begin{tablenotes}[para,flushleft]
    \emph{Note:} 
    1) \textit{Refe.} = Reference bias. 
    2) \textit{Prob.} = Probability distortion. 
    3) \textit{Los.} = Loss aversion. 
    \end{tablenotes}
    \end{threeparttable}
\end{table*}

\textbf{Learnability of behavioral parameters.}
The Prospect-Theoretic parameter set $\phi =\{ \alpha, \beta, \kappa,\gamma\}$ used in Pro-SF is inferred from observed manipulation behavior pairs $\{x_i,x_i'\}^n_{i=1}$ via maximum likelihood:
\begin{equation}
    \phi^{*} = \arg\max_{\phi} \sum_i \log P_{\phi}(x_i' \mid x_i),
\label{learnfunction}
\end{equation}
where $P_{\phi}(x_i' \mid x_i)$ is modeled using a discrete-choice (softmax) likelihood over a finite candidate set of feasible manipulations~(see details in Appendix~\ref{app:parameters}).

\section{Experiment}

\subsection{Experimental Setup}

\noindent \textbf{Dataset.} We evaluate our framework on five datasets, including four real-world and one synthetic benchmarks: \textit{Credit}, \textit{Adult}, \textit{Diabetes}, \textit{German}, \textit{Spam}, and \textit{Synthetic}~(see detailed description in Appendix~\ref{AppD1}).

\noindent\textbf{Agent behavior paradigms.}
To evaluate robustness under different behavioral assumptions, we consider three strategic manipulations:

\begin{itemize}[leftmargin=*, itemsep=0pt, topsep=0pt]
    \item \textbf{Fully rational.} The classical strategic classification paradigm, where agents manipulate their features to maximize utility in Eq.~\eqref{bestresponse}~\cite{hardt2016strategic}.

    \item \textbf{Non-rational.} Agents stochastically deviate from the rational best-response strategy, driven by behavioral mechanisms in behavioral economics~\cite{kahneman2013prospect,kuvcak2018machine,LEONETI20211042}.

    \item \textbf{Mixed behavioral.} A heterogeneous population in which a proportion $\pi$ of agents behave fully rationally, while the remaining $(1-\pi)$ exhibit non-rational strategic behavior. 
\end{itemize}

\noindent \textbf{Metric.} We use \textbf{accuracy} as the primary metric and introduce two additional metrics: \textbf{over-defense error}~\textit{(ODE)} and \textbf{under-defense error}~(\textit{UDE}). ODE measures false rejections caused by the classifier being overly defensive. UDE measures false acceptances due to non-rational manipulation of agents and insufficient defense of the classifier.

\noindent \textbf{Baseline.} We mainly conduct experiments with linear models, consistent with previous standard approaches~\cite{chen2023learning,shavit2020causal,pmlr-v139-ghalme21a} with Mahalanobis distance for manipulation cost~\cite{gavish2022optimal}. 

\noindent \textbf{Implementation.} We implement the classifier $f$ as a logistic regression model trained with cross-entropy loss and learning rate $10^{-3}$. In prospect-theoretic utility, we set the curvature parameters $\alpha=0.8$ and $\beta = 0.7$, the loss aversion coefficient $\kappa=2.25$, and the probability-weighting parameter $\gamma=0.7$ by default. In the mixed agent behavior paradigm, the proportion is set as $\pi = 0.2$. More implementation details are included in Appendix~\ref{AppD2}

\begin{figure*}[t]
    \centering
    \hspace{-0.866em}
    \begin{subfigure}[b]{0.238\textwidth}
        \includegraphics[width=\linewidth]{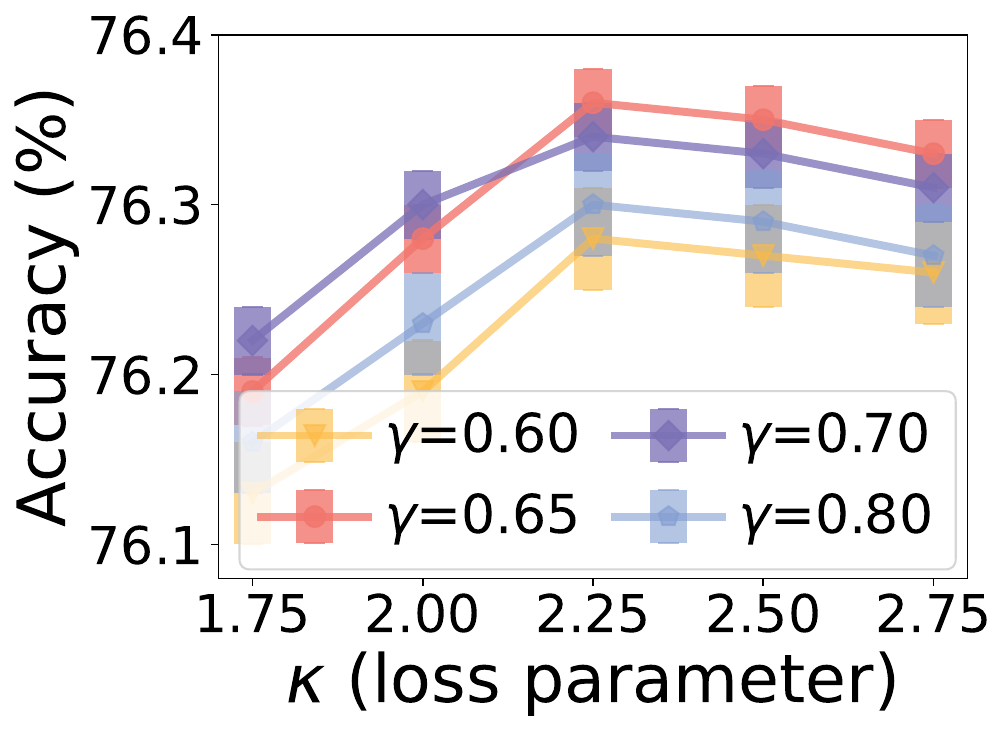}
        \caption{$\alpha=0.8,\ \beta=0.8$}
        \label{fig2a}
    \end{subfigure}
    \hspace{-0.666em}
    \begin{subfigure}[b]{0.238\textwidth}
        \includegraphics[width=\linewidth]{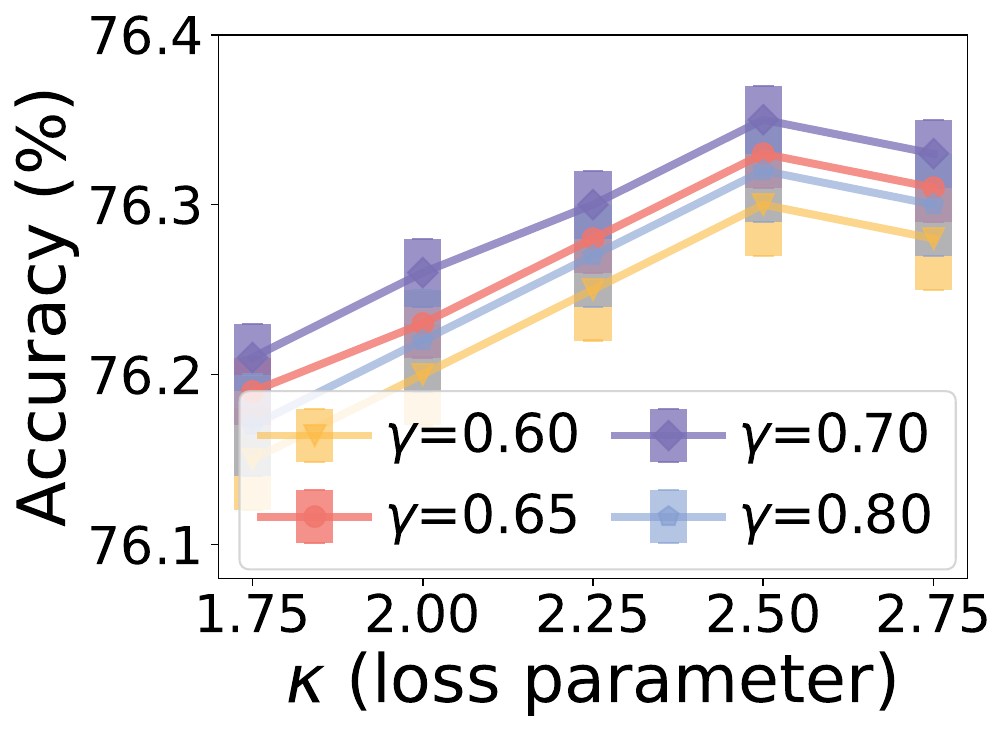}
        \caption{$\alpha=0.85,\ \beta=0.8$}
        \label{fig2b}
    \end{subfigure}
    \hspace{-0.666em}
    \begin{subfigure}[b]{0.238\textwidth}
        \includegraphics[width=\linewidth]{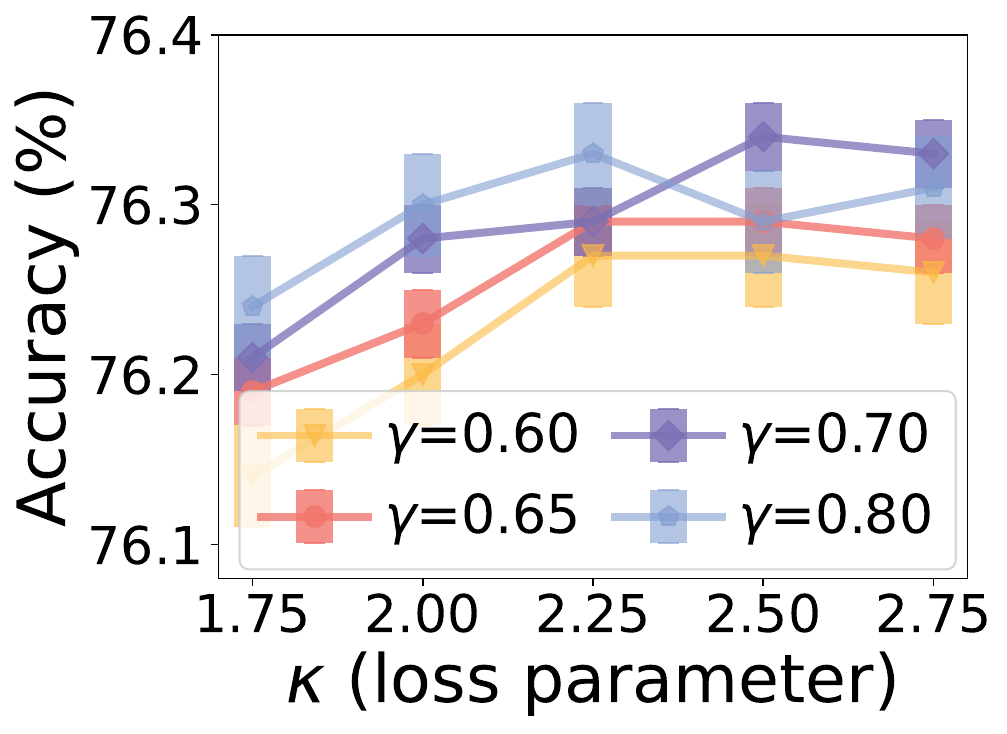}
        \caption{$\alpha=0.85,\ \beta=0.7$}
        \label{fig2c}
    \end{subfigure}
    \hspace{-0.666em}
    \begin{subfigure}[b]{0.238\textwidth}
        \includegraphics[width=\linewidth]{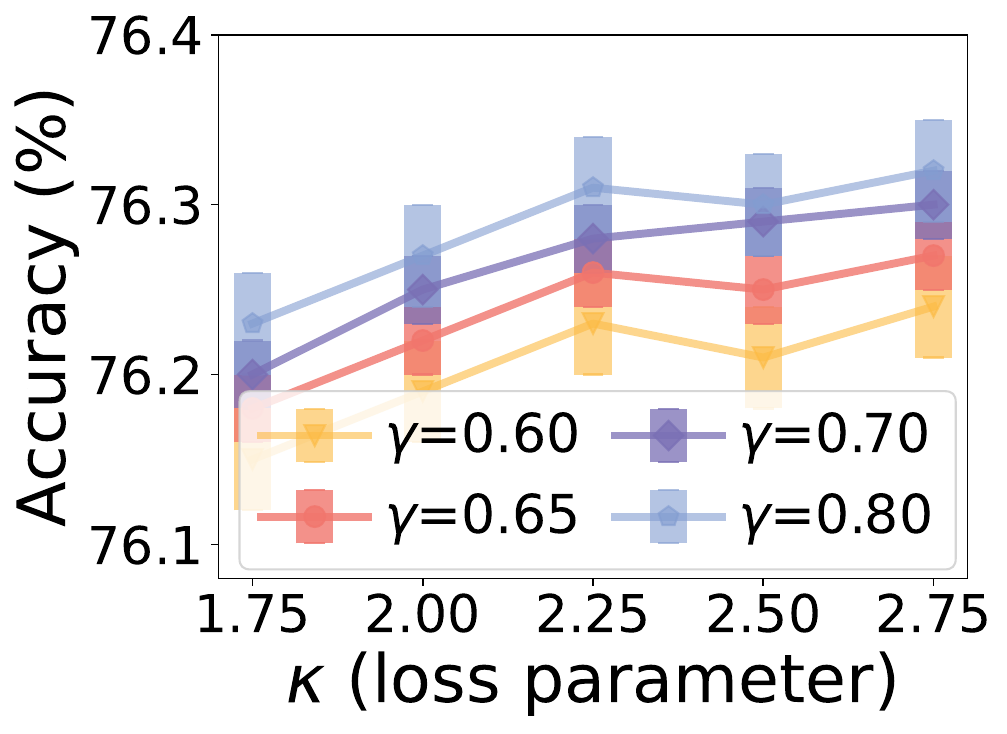}
        \caption{$\alpha=0.8,\ \beta=0.7$}
        \label{fig2d}
    \end{subfigure}
    \hspace{-0.666em}
    \caption{Parameter ablation results with parameters \((\alpha, \beta, \kappa, \gamma)\) and $r\in\{0,0.2,0.4,0.6,0.8\}$.}
    \label{fig2}
\end{figure*}

\begin{figure*}[h]
    \centering
    \hspace{-0.866em}
    \subfloat[$r\! \in \! \{0,0.3,0.6,0.9\}$]{\includegraphics[width=0.238\textwidth]{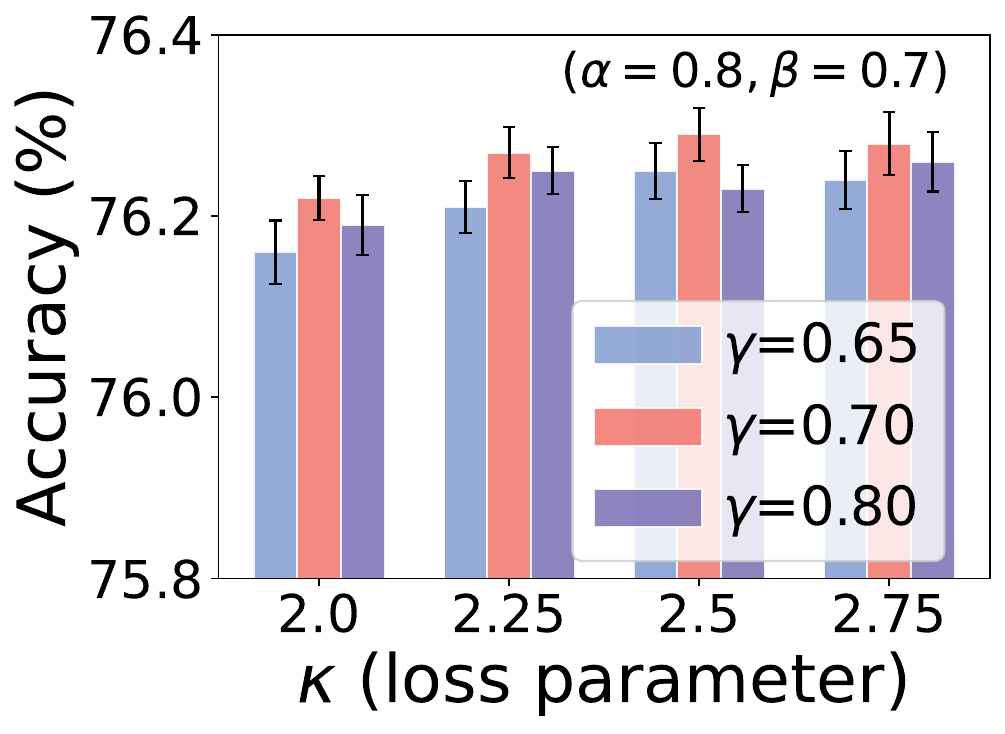}\label{fig4a}}
    \hspace{-0.3em}
    \subfloat[ $r \in \{0,0.4,0.7\}$]{\includegraphics[width=0.238\textwidth]{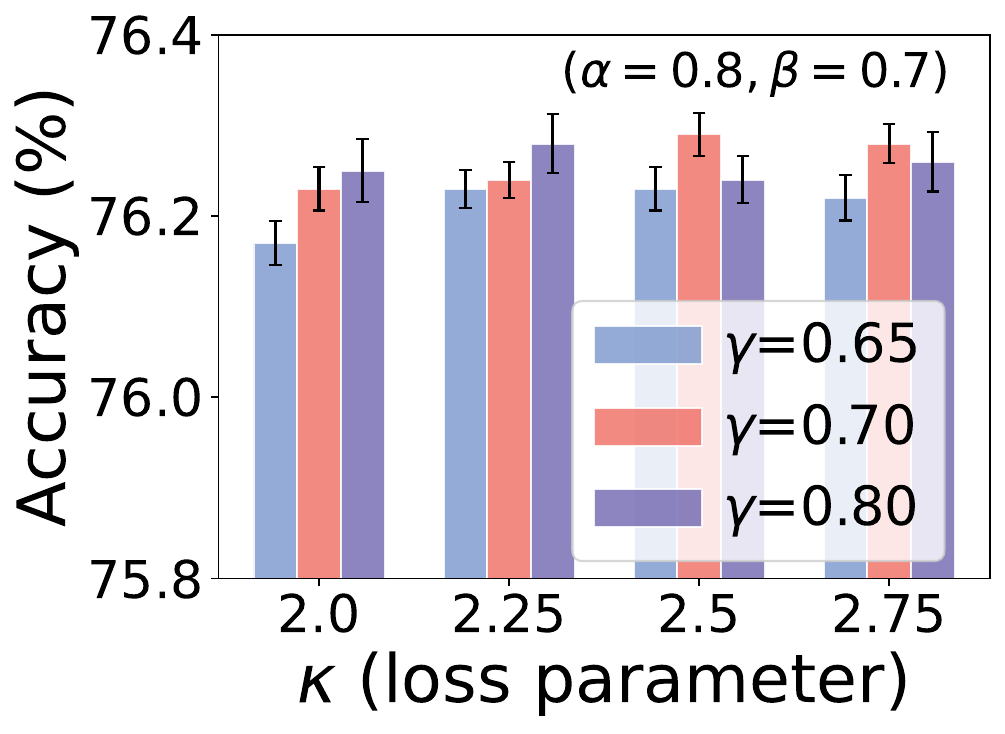}\label{fig4b}}
    \hspace{-0.3em}
    \subfloat[$r\! \in \! \{0,0.3,0.6,0.9\}$]{\includegraphics[width=0.238\textwidth]{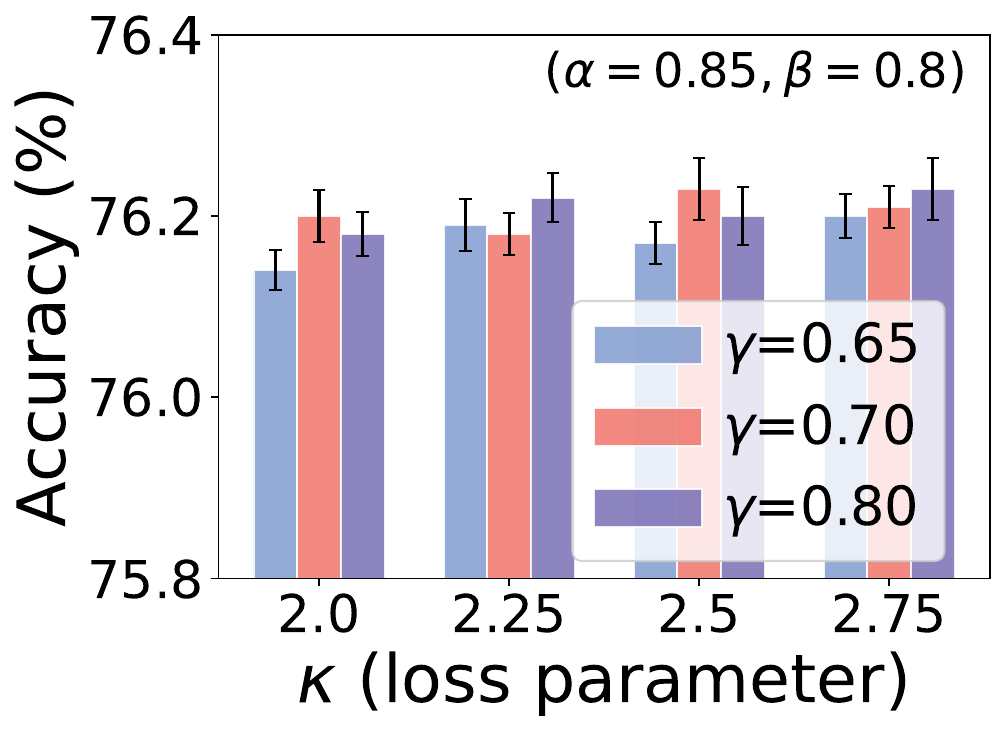}\label{fig4c}}
    \hspace{-0.3em}
    \subfloat[$r \in \{0,0.4,0.7\}$]{\includegraphics[width=0.238\textwidth]{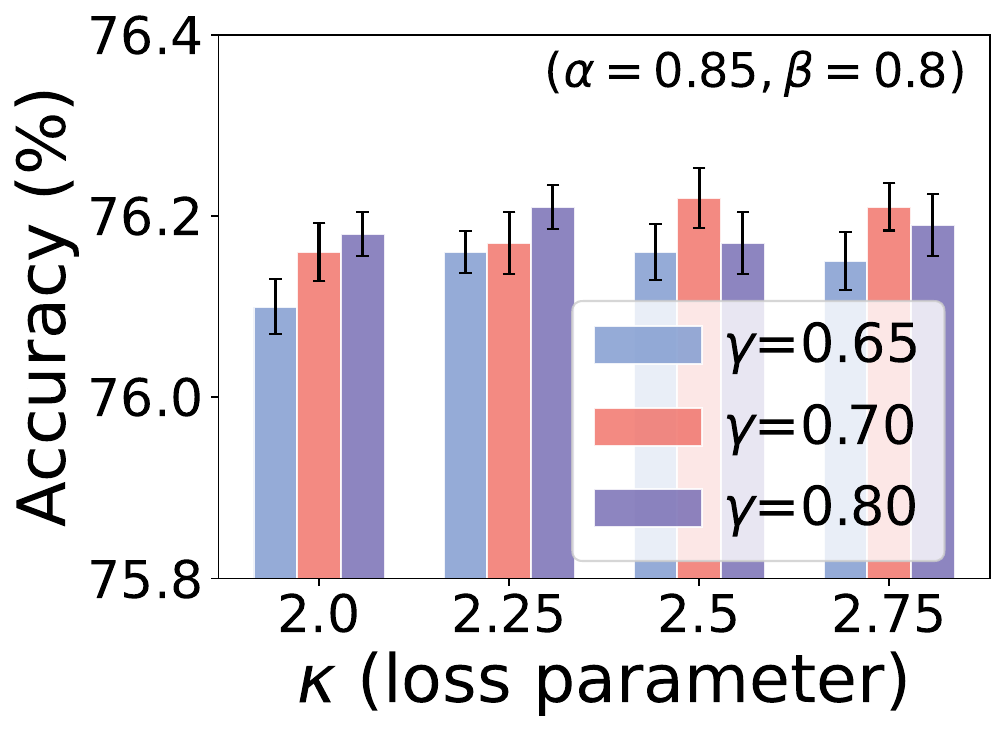}\label{fig4d}}
    \hspace{-0.666em}
    \caption{Parameter ablation results with parameters \( \kappa, \gamma, r \) with different $(\alpha, \beta)$.}
    \label{fig4}
\end{figure*}

\subsection{Ablation Study}
\label{sec:ablation}
To better understand the design of our Pro-SF, we conduct the following ablation studies.

\textbf{Ablation on behavioral mechanisms.} We first examine the necessity of the three core behavioral mechanisms: loss aversion, probability weighting, and reference bias. 
Starting from the full model, we construct variants by \emph{neutralizing} one component at a time.
For example, the \textbf{rational-weighting} variant removes probability distortion by setting $w^+(p)=p$ and $w^-(1-p)=1-p$.

\textbf{Parameter sensitivity.}
We further assess robustness to parameter choices by varying one parameter (or parameter pair).
Specifically, we consider:
\begin{itemize}[leftmargin=*, itemsep=0pt, topsep=0pt]
    \item $(\alpha,\beta)\!\in\!\{(0.85,0.8),(0.85,0.75),(0.8,0.8),(0.8,0.7)\}$;
    \item $\kappa\in\{1.75,2.0,2.25,2.5,2.75\}$;
    \item $\gamma\in\{0.6,0.65,0.7,0.8\}$,
    \item $r$ determined by the reference granularity $K\in\{3,4,5\}$.
\end{itemize}
A notation table is provided in Appendix~\ref{AppF}~( Table~\ref{notationtab}) and more results are included in Appendix~\ref{AppD3}(Fig.~\ref{fig7})~\footnote{The tested parameter ranges are determined based on our parameter learning objective (Eq.~\eqref{learnfunction}) and ranges commonly adopted in prospect-theoretic and behavioral economics literature~\cite{kahneman2013prospect,borkar2021prospect}.} .
 
\textbf{An alternative of probability distortion modeling} is the two-parameter \textit{Prelec} function~\cite{Prelec1998}, $w(p)=\exp\big(-\eta(-\ln p)^{\phi}\big), \quad \phi,\eta>0$. We examine this function in Appendix~\ref{AppD3}~(Table~\ref{tab8}).

\subsection{Result and Analysis}

\textbf{Overall performance.}
As shown in Table~\ref{tab1}, Pro-SF consistently outperforms rational-based models when agents exhibit behavioral deviations, both in the non-rational and the more realistic mixed settings, across all datasets.
At the same time, Pro-SF maintains competitive performance under the fully rational paradigm, indicating that incorporating behavioral modeling does not compromise performance when classical assumptions hold.
Overall, these results demonstrate that Pro-SF provides robust performance across diverse strategic environments.

\textbf{Ablation on behavioral mechanisms.}
Table~\ref{tab2} shows that each component contributes meaningfully, and the full Pro-SF (all three enabled) consistently performs best.
\emph{Reference bias} yields the largest marginal gain: removing it hurts most because the model can no longer capture abstention, which is key to correcting over-defense.
Dropping \emph{probability weighting} reduces robustness to distorted tail probabilities and increases under-defense, while removing \emph{loss aversion} eliminates gain--loss asymmetry and weakens boundary stability.
Overall, multi-mechanism variants outperform single-mechanism ones, indicating strong complementarity: reference bias governs \emph{whether} agents move, whereas probability weighting and loss aversion determine \emph{how far} they move.

\textbf{Parameter sensitivity.} Fig.~\ref{fig2} and Fig.~\ref{fig4} examine the effect of varying the Pro-SF parameters. Overall, accuracy remains stable across wide ranges, showing that Pro-SF does not rely on fine-tuned hyperparameters. Specifically, adjusting the loss aversion coefficient $\kappa$ only changes outcomes marginally, suggesting robustness to different levels of risk sensitivity. Variations in the probability distortion parameter $\gamma$ shift the relative emphasis on tail events but do not alter the overall trend. Different bins of $r$ only cause some fluctuations in performance, but are all better than the rational classifier. Finally, different curvature settings $(\alpha,\beta)$ yield consistent results, confirming that Pro-SF maintains effectiveness under diverse utility shapes.

\section{Conclusion}

This work challenges the classical rational-agent assumption in strategic classification and formalizes the problem of \emph{behaviorally realistic strategic classification}.
We theoretically characterize the impact of behavioral mismatch on deployment performance, and propose the \emph{Prospect-Guided Strategic Framework}, which incorporates three key psychological mechanisms underlying strategic manipulation and provides a behaviorally grounded solution for SC in the real-world.
Future work will extend this framework to richer and more diverse deployment settings.

\section*{Acknowledgement}
This work was jointly supported by the National Natural Science Foundation of China (Nos. 62276004, 62325604, 62525213), the Beijing Natural Science Foundation~(No.~L257007), the Beijing Major Science and Technology Project (No.~Z251100008425006), the Shanghai Sailing Program (No.~24YF2711600), the Provincial Natural Science Foundation of Heilongjiang Province~(No.~LH2023C069), the Provincial Natural Science Foundation of Hunan Province~(No.~2025JJ10008), the NUDT Youth Independent Innovation Science Fund~(No.~ZK25-20), the NUDT Innovation Foundation for Postgraduate~(No.~XJQY2025011), and the State Key Laboratory of General Artificial Intelligence.

\section*{Impact Statement}

This paper presents work whose goal is to advance the field of Machine
Learning. There are many potential societal consequences of our work, none of which we feel must be specifically highlighted here.

\bibliography{example_paper}
\bibliographystyle{icml2026}

\newpage
\appendix
\onecolumn

\section*{Appendix Contents}
\label{app:contents}

\begin{itemize}
    \item \hyperref[AppA]{\textbf{Additional Related Work}}
    \item \hyperref[App:MismatchProof]{\textbf{Proof of Proposition~\ref{prop:mismatch_delta}}}
    \item \hyperref[AppB]{\textbf{Proof of Proposition~\ref{pro2}}}
    \item \hyperref[AppC]{\textbf{Proof of Proposition~\ref{pro3}}}
    \item \hyperref[App:ProSFImprove]{\textbf{Pro-SF Improves Deployment Guarantees}}
    \item \hyperref[app:existence]{\textbf{Existence of Stackelberg Equilibrium under Pro-SF}}
    \item \hyperref[AppD]{\textbf{Additional Experiments}}
    \item \hyperref[appendix:human-validation]{\textbf{Validation with Real-World Manipulation Data}}
    \item \hyperref[app:parameters]{\textbf{Learnability of Behavioral Parameters}}
    \item \hyperref[app:illustration]{\textbf{Behavioral Illustration}}
\end{itemize}

\section{Additional Related Work}
\label{AppA}

\subsection{More Strategic Machine Learning Work}
\label{AppA1}
Several additional directions in strategic machine learning complement the studies highlighted in Section~\ref{rwork}.  
\paragraph{Robustness extensions.} Beyond classical defenses, researchers have explored stochastic classifiers~\cite{singh2024optimal}, differentiable optimization layers for end-to-end robustness~\cite{levanon2021strategic}, and graph-based models to capture inter-agent dependencies~\cite{eilat2023strategic}. Multi-agent formulations further investigate externalities and collective dynamics in strategic settings~\cite{hossain2024strategic,kou2025nips,kou2026fedharmony}.

\paragraph{Positive manipulation.} A growing body of work highlights the constructive potential of strategic behavior. For example, classifiers that incentivize authentic qualification gains have been proposed in education and hiring contexts~\cite{kleinberg2020classifiers,harris2022strategic,liu2026environment}. These approaches complement causal frameworks~\cite{miller2020strategic,chen2023learning,chen2026detecting}, which distinguish between manipulable and improvable features.

\paragraph{Performative prediction extensions.} Beyond the foundational contributions~\cite{perdomo2020performative,rosenfeld2020predictions,hardt2022performative,mendler2022anticipating}, further work explores neural methods for dynamic feedback loops~\cite{mofakhami2023performative} and more recent advances in model-induced distribution shifts~\cite{hardt2023performative}.

\paragraph{Fairness perspectives.} In addition to direct fairness interventions~\cite{pmlr-v162-zhang22l,10.1145/3593013.3594006,keswani2023addressing,lv2026partial}, several studies consider fairness as an incentive-alignment mechanism that influences agents’ strategic behavior, thereby connecting individual manipulation with broader social equity.

\subsection{Position of Our Work}
\label{app:position}

Our work, Pro-SF, is designed to \emph{complement} the classical strategic classification paradigm rather than overturn it.
The rational-agent model remains a useful idealization---and a natural baseline---for analyzing strategic behavior under well-specified utilities and costs.
Our contribution is to provide a behaviorally grounded formulation for settings where strategic responses systematically deviate from this idealization due to psychological biases.
Importantly, Pro-SF is fully compatible with classical SC: when behavioral mechanisms are absent (e.g., $\kappa=1$ and probability weighting reduces to the identity), Pro-SF reduces to the standard rational-response formulation.

\subsection{Additional Behavioral Economics Background}
\label{AppA2}

Beyond the works highlighted in Section~\ref{rwork}, a broader body of literature illustrates how behavioral economics explains systematic deviations from rationality and informs computational models.

\paragraph{Bounded rationality and heuristics.} Human decision-making is constrained by limited cognitive resources and often relies on simplified rules rather than exhaustive optimization~\cite{todd2000precis,evans2024bounded,wang2023out}. This perspective suggests that agents may respond in partial, myopic, or context-dependent ways instead of precise best responses.

\paragraph{Temporal and framing effects.} Individuals overweight immediate costs relative to delayed benefits and respond differently to equivalent options depending on presentation~\cite{laibson1997golden,odonoghue1999doing}. These effects imply that manipulations which appear objectively beneficial may still be avoided or inconsistently adopted.

\paragraph{Extended applications.} Behavioral frameworks have been applied across diverse algorithmic domains. Examples include portfolio optimization and asset pricing in finance~\cite{barberis2021prospect}, multi-criteria decision methods~\cite{LEONETI20211042}, fairness-aware allocation in management science~\cite{holmes2011management}, and human--AI decision support systems~\cite{payne2025analysis}. Collectively, these studies demonstrate the versatility of prospect-theoretic and behavioral models in explaining and predicting non-rational behavior.

\section{Proof of Proposition~\ref{prop:mismatch_delta}}
\label{App:MismatchProof}

\subsection{Assumptions and Setup}

We work under a threshold classifier $f_R(x) = \mathbf{1}\{s(x) \ge \tau\}$, where $s: \mathcal{X} \to \mathbb{R}$ is the decision score and $\tau \in \mathbb{R}$ is the threshold, with the $0$--$1$ loss $\mathcal{L}(f(x), y) = \mathbf{1}\{f(x) \neq y\}$.
The deployment error reduces to:
\begin{equation}
    \delta(\theta) = \mathbb{P}_{(x,y)\sim\mathcal{D}_\theta}\!\big(f_R(x) \neq y\big).
\end{equation}

The key assumption is the \emph{boundary-mass condition}: there exist constants 
$\epsilon > 0$ and $\gamma > 0$ such that
\begin{equation}
\label{eq:boundary_mass}
\mathbb{P}_{(x,y)\sim\mathcal{D}_\theta}\!\Big(
|s(x)-\tau| \le \gamma,\;\; y \neq \mathbf{1}\{s(x) \ge \tau\}
\Big) \;\ge\; \epsilon.
\end{equation}
This condition states that under the true behavioral distribution $\mathcal{D}_\theta$, a non-negligible fraction of deployment samples falls within a $\gamma$-neighborhood of the decision boundary and carries labels that disagree with the classifier's prediction.

\subsection{Discussion: Naturality of the Boundary-Mass Condition}

The boundary-mass condition~\eqref{eq:boundary_mass} is not an artificial regularity assumption; it is a direct and natural consequence of the behavioral mismatch between $\mathcal{D}_\theta$ and $\mathcal{D}_R$ in strategic classification settings.

\textbf{Why the condition holds in practice.}
The classifier $f_R$ is trained on the rational post-manipulation distribution $\mathcal{D}_R$, and its decision boundary $\tau$ is calibrated to correctly separate the positive and negative classes under $\mathcal{D}_R$.
When agents deviate from rational behavior---due to loss aversion, reference bias, or probability distortion---the actual deployment distribution $\mathcal{D}_\theta$ differs from $\mathcal{D}_R$.
This shift systematically places a portion of the agent population near the decision boundary with incorrect labels under $f_R$, for two concrete reasons:
\begin{itemize}[leftmargin=*]
    \item \textbf{Over-defense}: agents subject to loss aversion or reference bias may abstain from manipulation even when rational agents would manipulate to cross $\tau$.  
    These abstaining agents remain on the ``wrong side'' of the boundary that $f_R$ was trained to defend, contributing label-disagreeing mass near $\tau$.
    \item \textbf{Under-defense}: agents subject to probability distortion may overshoot the rational manipulation endpoint, landing on the other side of $\tau$ in a region where $f_R$ was not designed to classify correctly.
\end{itemize}

In both cases, the mass accumulates specifically near the decision boundary because the classifier $f_R$ was optimized to be accurate far from $\tau$; the boundary region is precisely where the mismatch between $\mathcal{D}_R$ and $\mathcal{D}_\theta$ is most consequential.
The parameters $\epsilon$ and $\gamma$ quantify the severity of this mismatch: larger behavioral deviations produce larger $\epsilon$ and $\gamma$, as confirmed empirically in Fig.~\ref{fig3c}.

\textbf{Relation to distribution shift.}
Condition~\eqref{eq:boundary_mass} is strictly weaker than requiring $\mathcal{D}_\theta \neq \mathcal{D}_R$ globally; it only requires the distributional difference to manifest in a decision-relevant region.
This is precisely the regime that matters for classification performance: differences far from the boundary have no impact on accuracy, while differences near the boundary do.

\subsection{Proof}

Define the boundary disagreement set:
\begin{equation}
    \mathcal{A} \;:=\; \Big\{(x,y) : |s(x)-\tau| \le \gamma,\;\; 
    y \neq \mathbf{1}\{s(x) \ge \tau\}\Big\}.
\end{equation}
For any $(x,y) \in \mathcal{A}$, the classifier predicts $f_R(x) = \mathbf{1}\{s(x) \ge \tau\} \neq y$, so $\mathcal{L}(f_R(x), y) = 1$.
Therefore:
\begin{align}
    \delta(\theta) 
    &= \mathbb{E}_{(x,y)\sim\mathcal{D}_\theta}\big[\mathbf{1}\{f_R(x)\neq y\}\big] \\
    &\ge \mathbb{E}_{(x,y)\sim\mathcal{D}_\theta}\big[\mathbf{1}\{(x,y)\in\mathcal{A}\}\big] \\
    &= \mathbb{P}_{(x,y)\sim\mathcal{D}_\theta}\big((x,y)\in\mathcal{A}\big) \\
    &\ge \epsilon,
\end{align}
where the last inequality uses the boundary-mass condition~\eqref{eq:boundary_mass}. Hence $\delta(\theta) \ge \epsilon > 0$, establishing that the deployment error is strictly bounded away from zero and cannot be eliminated by optimizing under the rational-agent assumption.

\section{Proof of Proposition~\ref{pro2}}
\label{AppB}

\subsection{Assumptions and Setup}

Recall that $b_R(x)$ denotes the rational best response (Eq.~\eqref{bestresponse}), and $f_R$ is the classifier optimized under the rational-agent assumption, i.e., an empirical risk minimizer of
\begin{equation}
    R_{\mathrm{train}}(f) = \mathbb{E}\big[\mathcal{L}(f(b_R(X)), Y)\big]
    \quad \text{(cf.\ Eq.~\eqref{classifier}).}
\end{equation}
Intuitively, $f_R$ is trained as if every agent manipulates to the rational endpoint $b_R(x)$. Under behavioral biases, however, some agents abstain from manipulation entirely. This training--deployment mismatch is the root cause of over-defense.

We define the \emph{abstention set}:
\begin{equation}
    A \;=\; \big\{x : b_R(x) \neq x 
    \ \text{ and the agent actually stays at } x\big\},
\end{equation}
i.e., inputs where the rational model predicts manipulation but the behavioral agent abstains. We assume $P(A) > 0$, so abstention occurs with non-negligible probability.

On the abstention set $A$, we define two key quantities:
\begin{align}
    \tau_A \;&:=\; \Pr\!\big(f_R(X) \neq f_R(b_R(X)) \mid X \in A\big), 
    \label{eq:tauA} \\
    \varepsilon_R \;&:=\; \Pr\!\big(f_R(b_R(X)) \neq Y \mid X \in A\big).
    \label{eq:epsR}
\end{align}
Here $\tau_A$ measures the probability that $f_R$ predicts differently at the true feature $x$ versus the rational endpoint $b_R(x)$---capturing the extent to which the decision boundary has shifted due to over-defense. The quantity $\varepsilon_R$ is the residual training error of $f_R$ at the rational endpoints $b_R(x)$, which is small when the classifier fits the training distribution well.

\subsection{Discussion: Naturality of the Assumptions}

\textbf{Why $P(A) > 0$ holds in practice.} The abstention set $A$ is non-empty whenever behavioral biases cause agents to forgo manipulation that rational agents would undertake. This is precisely the over-defense scenario: agents subject to loss aversion perceive the manipulation cost as disproportionately large, and agents subject to reference bias may assess themselves as too far from the acceptance threshold to bother trying. Both mechanisms are
well-documented in behavioral economics~\citep{kahneman2013prospect} and are directly captured by Pro-SF. The assumption $P(A) > 0$ therefore holds whenever any fraction of the agent population exhibits these biases, which is the defining condition of the BR-SC problem.

\textbf{Why $\tau_A > 2\varepsilon_R$ holds in practice.} The classifier $f_R$ is trained on the post-manipulation distribution $\mathcal{D}_R$, so its decision boundary $\tau$ is optimized to correctly classify agents at their rational endpoints $b_R(x)$. Consequently, $\varepsilon_R$ is small: $f_R$ performs well at $b_R(x)$ by construction. In contrast, $\tau_A$ can be large: since $f_R$ shifts its boundary outward to defend against rational manipulations, it systematically misclassifies agents who stay at $x \in A$ and are on the ``wrong side'' of the shifted boundary. The condition $\tau_A > 2\varepsilon_R$ thus formalizes the intuition that over-defense hurts most when the boundary shift is non-trivial and the classifier is well-trained---a regime that is typical in practice, as confirmed by the monotone increase of over-defense error with behavioral deviation in Fig.~\ref{fig3c}.

\subsection{Proof}

\textbf{Step 1: Loss decomposition on $A$.}
For any classifier $f$ and any $(x, y)$ with $x \in A$, the difference in $0$--$1$ loss between the true feature $x$ and the rational endpoint $b_R(x)$ satisfies:
\begin{equation}
    \mathcal{L}(f(x), y) - \mathcal{L}(f(b_R(x)), y)
    = \mathbf{1}\{f(x)\neq y,\, f(b_R(x))=y\}
    - \mathbf{1}\{f(x)=y,\, f(b_R(x))\neq y\}.
\end{equation}
Taking conditional expectation over $X \in A$:
\begin{align}
    &\mathbb{E}\big[\mathcal{L}(f(X),Y) - \mathcal{L}(f(b_R(X)),Y) \mid X \in A\big]
    \nonumber \\
    &\quad= \Pr(f(X)\neq Y,\, f(b_R(X))=Y \mid X\in A)
    - \Pr(f(X)=Y,\, f(b_R(X))\neq Y \mid X\in A).
    \label{eq:decomp}
\end{align}

\textbf{Step 2: Lower bound via set inclusions.}
We apply the following two inclusions:
\begin{align}
    \{f(X)\neq Y,\, f(b_R(X))=Y\}
    &\;\supseteq\; \{f(X)\neq f(b_R(X))\} \setminus \{f(b_R(X))\neq Y\},
    \label{eq:inc1} \\
    \{f(X)=Y,\, f(b_R(X))\neq Y\}
    &\;\subseteq\; \{f(b_R(X))\neq Y\}.
    \label{eq:inc2}
\end{align}
Inclusion~\eqref{eq:inc1} states that whenever $f$ disagrees between $x$ and $b_R(x)$, this contributes to the first term unless $b_R(x)$ is already mislabeled. Inclusion~\eqref{eq:inc2} states that the second term is bounded by the training error at $b_R(x)$.
Substituting into~\eqref{eq:decomp} gives the lower bound:
\begin{equation}
    \mathbb{E}\big[\mathcal{L}(f(X),Y) - \mathcal{L}(f(b_R(X)),Y) \mid X\in A\big]
    \;\ge\;
    \Pr(f(X)\neq f(b_R(X)) \mid X\in A)
    - 2\,\Pr(f(b_R(X))\neq Y \mid X\in A).
    \label{eq:lb}
\end{equation}

\textbf{Step 3: Apply to $f_R$ and globalize.}
Substituting $f = f_R$ into~\eqref{eq:lb} and using the definitions of $\tau_A$ and $\varepsilon_R$ in Eqs.~\eqref{eq:tauA}--\eqref{eq:epsR}:
\begin{equation}
    \mathbb{E}\big[\mathcal{L}(f_R(X),Y) - \mathcal{L}(f_R(b_R(X)),Y) \mid X\in A\big]
    \;\ge\; \tau_A - 2\,\varepsilon_R.
\end{equation}
Multiplying by $P(A) > 0$ and adding the contribution from $X \notin A$:
\begin{equation}
    \mathbb{E}[\mathcal{L}(f_R(X),Y)] - \mathbb{E}[\mathcal{L}(f_R(b_R(X)),Y)]
    \;\ge\; P(A)\,(\tau_A - 2\,\varepsilon_R).
\end{equation}
Under the condition $\tau_A > 2\varepsilon_R$, the right-hand side is strictly positive, yielding:
\begin{equation}
    \mathbb{E}[\mathcal{L}(f_R(x),y)] \;>\; \mathbb{E}[\mathcal{L}(f_R(b_R(x)),y)].
\end{equation}

\begin{remark}
Geometrically, $f_R$ shifts its decision boundary outward to guard against the rational endpoints $b_R(x)$. On the abstention set $A$, agents remain at $x$, which now falls on the wrong side of the shifted boundary. The quantity $\tau_A - 2\varepsilon_R$ measures the net excess deployment error due to this over-defense: $\tau_A$ count the fraction of abstaining agents misclassified by the shifted boundary, while $2\varepsilon_R$ accounts for the small residual error $f_R$ already makes at the rational endpoints it was trained on.
\end{remark}

\section{Proof of Proposition~\ref{pro3}}
\label{AppC}

\subsection{Assumptions and Setup}

Recall that $b_R(x)$ denotes the rational best response (Eq.~\eqref{bestresponse}), and $f_R$ is the classifier optimized under the rational-agent assumption, i.e., an empirical risk minimizer of
\begin{equation}
    R_{\mathrm{train}}(f) = \mathbb{E}\big[\mathcal{L}(f(b_R(X)), Y)\big]
    \quad \text{(cf.\ Eq.~\eqref{classifier}).}
\end{equation}
Intuitively, $f_R$ is trained to defend against manipulations that end exactly at $b_R(x)$. Under behavioral biases such as probability distortion and loss aversion, however, agents may overshoot the rational endpoint and land at $b_\theta(x)$ with
$b_\theta(x) \neq b_R(x)$. Since $f_R$ was never trained to handle these overshoot points, its defense is insufficient---the essence of under-defense.

We define the \emph{overshoot set}:
\begin{equation}
    B \;=\; \big\{x : b_R(x) \neq x
    \ \text{ and } b_\theta(x) \neq b_R(x)
    \text{ with overshoot beyond } b_R(x)\big\},
\end{equation}
i.e., inputs where the agent manipulates further than the rational endpoint along the manipulation direction. We assume $P(B) > 0$, so overshoot occurs with non-negligible probability.

On the overshoot set $B$, we define two key quantities:
\begin{align}
    \tau_B \;&:=\; \Pr\!\big(f_R(b_\theta(X)) \neq f_R(b_R(X)) \mid X \in B\big),
    \label{eq:tauB} \\
    \varepsilon_R \;&:=\; \Pr\!\big(f_R(b_R(X)) \neq Y \mid X \in B\big).
    \label{eq:epsRB}
\end{align}
Here $\tau_B$ measures the probability that $f_R$ predicts differently at the overshoot point $b_\theta(x)$ versus the rational endpoint $b_R(x)$---capturing how often agents land in regions where $f_R$'s defense fails. The quantity $\varepsilon_R$ is the residual training error of $f_R$ at the rational endpoints, which is small when the classifier fits the training distribution well.

\subsection{Discussion: Naturality of the Assumptions}

\textbf{Why $P(B) > 0$ holds in practice.}
The overshoot set $B$ is non-empty whenever behavioral biases drive agents beyond the rational manipulation endpoint. Two mechanisms in Pro-SF directly produce this:
\begin{itemize}[leftmargin=*]
    \item \textbf{Probability distortion}: agents overweight small probabilities of acceptance, causing them to perceive a higher chance of success than actually exists. This inflated belief motivates more aggressive manipulation, pushing agents past the rational endpoint $b_R(x)$.
    \item \textbf{Loss aversion}: the fear of rejection amplifies the perceived cost of falling just short of the acceptance threshold, leading agents to overshoot as a precautionary measure to ensure they clear the boundary.
\end{itemize}
Both mechanisms are integral to the BR-SC problem setting, so $P(B) > 0$ holds whenever any fraction of the population exhibits these biases---which is the defining condition of the problem we study.

\textbf{Why $\tau_B > 2\varepsilon_R$ holds in practice.}
The classifier $f_R$ is trained on $\mathcal{D}_R$, so it is well-calibrated at the rational endpoints $b_R(x)$: the residual error $\varepsilon_R$ is small by construction. In contrast, $\tau_B$ can be large because the overshoot region
$b_\theta(x)$ lies outside the support of the training distribution $\mathcal{D}_R$.
The classifier $f_R$ has no training signal in this region, so its predictions there are unreliable. The condition $\tau_B > 2\varepsilon_R$. Therefore, captures the typical regime where the classifier performs well on its training distribution
but poorly on the unseen overshoot region---a standard manifestation of distribution shift, confirmed empirically by the monotone increase of under-defense error with behavioral deviation in Fig.~\ref{fig3c}.

\subsection{Proof}

\textbf{Step 1: Loss decomposition on $B$.}
For any classifier $f$ and any $(x, y)$ with $x \in B$, the difference in $0$--$1$ loss between the overshoot point $b_\theta(x)$ and the rational endpoint $b_R(x)$ satisfies:
\begin{equation}
    \mathcal{L}(f(b_\theta(x)), y) - \mathcal{L}(f(b_R(x)), y)
    = \mathbf{1}\{f(b_\theta(x))\neq y,\, f(b_R(x))=y\}
    - \mathbf{1}\{f(b_\theta(x))=y,\, f(b_R(x))\neq y\}.
\end{equation}
Taking conditional expectation over $X \in B$:
\begin{align}
    &\mathbb{E}\big[\mathcal{L}(f(b_\theta(X)),Y)
    - \mathcal{L}(f(b_R(X)),Y) \mid X \in B\big] \nonumber \\
    &\quad= \Pr(f(b_\theta(X))\neq Y,\, f(b_R(X))=Y \mid X\in B)
    - \Pr(f(b_\theta(X))=Y,\, f(b_R(X))\neq Y \mid X\in B).
    \label{eq:decomp_B}
\end{align}

\textbf{Step 2: Lower bound via set inclusions.}
We apply the following two inclusions:
\begin{align}
    \{f(b_\theta)\neq Y,\, f(b_R)=Y\}
    &\;\supseteq\; \{f(b_\theta)\neq f(b_R)\} \setminus \{f(b_R)\neq Y\},
    \label{eq:inc1B} \\
    \{f(b_\theta)=Y,\, f(b_R)\neq Y\}
    &\;\subseteq\; \{f(b_R)\neq Y\}.
    \label{eq:inc2B}
\end{align}
Inclusion~\eqref{eq:inc1B} states that whenever $f$ disagrees between $b_\theta(x)$ and $b_R(x)$, this contributes to the loss increase unless $b_R(x)$ is already mislabeled. Inclusion~\eqref{eq:inc2B} states that the loss-decreasing term is bounded by the training error at $b_R(x)$.
Substituting into~\eqref{eq:decomp_B} gives the lower bound:
\begin{equation}
    \mathbb{E}\big[\mathcal{L}(f(b_\theta(X)),Y)
    - \mathcal{L}(f(b_R(X)),Y) \mid X\in B\big]
    \;\ge\;
    \Pr(f(b_\theta(X))\neq f(b_R(X)) \mid X\in B)
    - 2\,\Pr(f(b_R(X))\neq Y \mid X\in B).
    \label{eq:lb_B}
\end{equation}

\textbf{Step 3: Apply to $f_R$ and globalize.}
Substituting $f = f_R$ into~\eqref{eq:lb_B} and using the definitions of $\tau_B$ and $\varepsilon_R$ in Eqs.~\eqref{eq:tauB}--\eqref{eq:epsRB}:
\begin{equation}
    \mathbb{E}\big[\mathcal{L}(f_R(b_\theta(X)),Y)
    - \mathcal{L}(f_R(b_R(X)),Y) \mid X\in B\big]
    \;\ge\; \tau_B - 2\,\varepsilon_R.
\end{equation}
Multiplying by $P(B) > 0$ and adding the contribution from $X \notin B$:
\begin{equation}
    \mathbb{E}[\mathcal{L}(f_R(b_\theta(X)),Y)]
    - \mathbb{E}[\mathcal{L}(f_R(b_R(X)),Y)]
    \;\ge\; P(B)\,(\tau_B - 2\,\varepsilon_R).
\end{equation}
Under the condition $\tau_B > 2\varepsilon_R$, the right-hand side is strictly positive, yielding:
\begin{equation}
    \mathbb{E}\big[\mathcal{L}(f_R(b_\theta(x)),y)\big]
    \;>\;
    \mathbb{E}\big[\mathcal{L}(f_R(b_R(x)),y)\big].
\end{equation}
\hfill$\square$

\begin{remark}
Geometrically, $f_R$ learns to guard the rational endpoints $b_R(x)$, but the overshoot region around $b_\theta(x)$ lies outside the support of its training distribution. The quantity $\tau_B - 2\varepsilon_R$ measures the net excess deployment error: $\tau_B$ counts the fraction of overshooting agents that land where $f_R$ makes a different and incorrect prediction, while $2\varepsilon_R$ discounts the small residual error $f_R$ already makes at the rational endpoints it was trained on.
\end{remark}

\section{Pro-SF Improves Deployment Guarantees}
\label{App:ProSFImprove}

In this appendix, we show that, by reducing the mismatch between the training-time post-manipulation distribution and the deployment distribution, Pro-SF yields a strictly stronger deployment-performance guarantee than training under the rational-agent assumption.

\subsection{Setup.}
Let \(\mathcal{D}_\theta\) denote the true post-manipulation (deployment) distribution induced by the behavioral response \(b_\theta\).
Let \(\mathcal{D}_R\) denote the post-manipulation distribution induced by the rational response \(b_R\).
Similarly, let \(\mathcal{D}_P\) denote the post-manipulation distribution induced by the Prospect-Guided response \(b_P\) (Pro-SF).
Define the population risk under a distribution \(\mathcal{D}\) by
\begin{equation}
R_{\mathcal{D}}(f) := \mathbb{E}_{(x,y)\sim \mathcal{D}}\big[\mathcal{L}(f(x),y)\big],
\end{equation}
where the loss satisfies \(0\le \mathcal{L}\le 1\).
Let
\begin{equation}
f_R \in \arg\min_f R_{\mathcal{D}_R}(f),
\qquad
f_P \in \arg\min_f R_{\mathcal{D}_P}(f)
\end{equation}
be classifiers optimized on the rational and Pro-SF induced training distributions, respectively.
Finally, let
\begin{equation}
R^\star := \inf_f R_{\mathcal{D}_\theta}(f)
\end{equation}
denote the Bayes-optimal (best achievable) deployment risk under the true post-manipulation distribution \(\mathcal{D}_\theta\).

\subsection{Lemma (TV controls expectation shift).}
For any two distributions \(P,Q\) on \(\mathcal{X}\times\mathcal{Y}\) and any measurable function \(g:\mathcal{X}\times\mathcal{Y}\to[0,1]\),
\begin{equation}
\label{eq:tv_lemma}
\big|\mathbb{E}_{P}[g]-\mathbb{E}_{Q}[g]\big|
\le
\mathrm{TV}(P,Q).
\end{equation}

\begin{proof}
This is a standard property of total variation distance: by the variational characterization,
\(
\mathrm{TV}(P,Q)
=
\sup_{A} |P(A)-Q(A)|
=
\sup_{0\le g\le 1} |\mathbb{E}_P[g]-\mathbb{E}_Q[g]|.
\)
Applying it to the given \(g\in[0,1]\) yields \eqref{eq:tv_lemma}.
\end{proof}

\paragraph{Proposition (excess deployment risk is controlled by mismatch).}
For any surrogate training distribution \(\widehat{\mathcal{D}}\) and the corresponding optimizer
\(
\widehat f \in \arg\min_f R_{\widehat{\mathcal{D}}}(f),
\)
the deployment excess risk satisfies
\begin{equation}
\label{eq:excess_bound_general}
R_{\mathcal{D}_\theta}(\widehat f) - R^\star
\;\le\;
2\,\mathrm{TV}\!\big(\mathcal{D}_\theta,\widehat{\mathcal{D}}\big).
\end{equation}

\begin{proof}
Let \(f^\star\in\arg\min_f R_{\mathcal{D}_\theta}(f)\) be a Bayes-optimal classifier under \(\mathcal{D}_\theta\).
By Lemma~\eqref{eq:tv_lemma} applied to the bounded loss, for any classifier \(f\),
\begin{equation}
\label{eq:tv_shift_loss}
R_{\mathcal{D}_\theta}(f) \le R_{\widehat{\mathcal{D}}}(f) + \mathrm{TV}(\mathcal{D}_\theta,\widehat{\mathcal{D}}),
\qquad
R_{\widehat{\mathcal{D}}}(f) \le R_{\mathcal{D}_\theta}(f) + \mathrm{TV}(\mathcal{D}_\theta,\widehat{\mathcal{D}}).
\end{equation}
Using \(\widehat f\) optimal for \(\widehat{\mathcal{D}}\),
\begin{equation}
R_{\widehat{\mathcal{D}}}(\widehat f) \le R_{\widehat{\mathcal{D}}}(f^\star).
\end{equation}
Now chain the inequalities:
\begin{align*}
R_{\mathcal{D}_\theta}(\widehat f)
&\le R_{\widehat{\mathcal{D}}}(\widehat f) + \mathrm{TV}(\mathcal{D}_\theta,\widehat{\mathcal{D}}) \\
&\le R_{\widehat{\mathcal{D}}}(f^\star) + \mathrm{TV}(\mathcal{D}_\theta,\widehat{\mathcal{D}}) \\
&\le R_{\mathcal{D}_\theta}(f^\star) + 2\,\mathrm{TV}(\mathcal{D}_\theta,\widehat{\mathcal{D}})
\qquad\text{(by \eqref{eq:tv_shift_loss})}\\
&= R^\star + 2\,\mathrm{TV}(\mathcal{D}_\theta,\widehat{\mathcal{D}}).
\end{align*}
Rearranging gives \eqref{eq:excess_bound_general}.
\end{proof}

\subsection{Corollary.}
Applying \eqref{eq:excess_bound_general} with \(\widehat{\mathcal{D}} = \mathcal{D}_R\) and \(\widehat{\mathcal{D}} = \mathcal{D}_P\) yields
\begin{equation}
\label{eq:rat_bound}
R_{\mathcal{D}_\theta}(f_R) - R^\star
\;\le\;
2\,\mathrm{TV}\!\big(\mathcal{D}_\theta,\mathcal{D}_R\big),
\end{equation}
and
\begin{equation}
\label{eq:pro_bound}
R_{\mathcal{D}_\theta}(f_P) - R^\star
\;\le\;
2\,\mathrm{TV}\!\big(\mathcal{D}_\theta,\mathcal{D}_P\big).
\end{equation}
Therefore, whenever Pro-SF reduces behavioral mismatch in the sense that
\begin{equation}
\label{eq:tv_improve_assump}
\mathrm{TV}\!\big(\mathcal{D}_\theta,\mathcal{D}_P\big)
<
\mathrm{TV}\!\big(\mathcal{D}_\theta,\mathcal{D}_R\big),
\end{equation}
it provides a \emph{strictly stronger} deployment-performance guarantee than training under the rational-agent assumption, since the right-hand side upper bound in \eqref{eq:pro_bound} is strictly smaller than that in \eqref{eq:rat_bound}.

\section{Existence of Stackelberg Equilibrium Under Pro-SF}
\label{app:existence}

This appendix establishes three structural properties of the Stackelberg game induced by Pro-SF: existence of a Stackelberg equilibrium (Proposition~\ref{prop:existence}), local stability under best-response dynamics (Proposition~\ref{prop:stability}), and local uniqueness of the equilibrium (Corollary~\ref{cor:uniqueness}).

\subsection{Setup}

The Pro-SF game follows the classic Stackelberg structure with two players:
\begin{itemize}[leftmargin=*]
    \item \textbf{Leader} (classifier): chooses a parameter vector $\theta \in \Theta$, where $\Theta \subset \mathbb{R}^d$ is the classifier parameter space.
    \item \textbf{Follower} (agent): observes $\theta$ and chooses a manipulated feature vector $x' \in \mathcal{X}$ to maximize the prospect-based utility $U_P(x, x'; \phi)$ 
    defined in Eq.~\eqref{eq:cpt-binary-utility}.
\end{itemize}

Given any classifier parameter $\theta$, the agent's best-response function is:
\begin{equation}
    b_P(x;\theta) \in \arg\max_{x' \in \mathcal{X}}\; U_P(x, x'; \phi),
    \label{eq:agent_br}
\end{equation}
and the leader's objective induced by this response is:
\begin{equation}
    F(\theta) \;=\; \mathbb{E}_{(x,y)\sim\mathcal{D}}\!\left[
    \mathcal{L}\!\left(f_\theta\!\left(b_P(x;\theta)\right),\, y\right)\right].
    \label{eq:leader_obj}
\end{equation}

\begin{definition}[Stackelberg Equilibrium under Pro-SF]
A pair $(\theta^*,\, b^*(\cdot))$ is a \emph{Stackelberg equilibrium} of the Pro-SF game if:
\begin{enumerate}[label=(\roman*)]
    \item \textbf{Follower optimality}: for every $x \in \mathcal{X}$,
    \begin{equation}
        b^*(x) \;\in\; \arg\max_{x' \in \mathcal{X}}\; U_P(x, x'; \phi);
    \end{equation}
    \item \textbf{Leader optimality}: 
    \begin{equation}
        \theta^* \;\in\; \arg\min_{\theta \in \Theta}\; 
        \mathbb{E}_{(x,y)\sim\mathcal{D}}\!\left[
        \mathcal{L}\!\left(f_\theta(b^*(x)),\, y\right)\right].
    \end{equation}
\end{enumerate}
\end{definition}

\subsection{Proposition: Existence of Stackelberg Equilibrium}

\begin{proposition}[Existence]
\label{prop:existence}
Under the Pro-SF framework, suppose the following conditions hold:
\begin{enumerate}[label=(\roman*)]
    \item The feasible manipulation space $\mathcal{X} \subset \mathbb{R}^p$ is compact;
    \item The classifier parameter space $\Theta \subset \mathbb{R}^d$ is compact;
    \item The prospect-based utility $U_P(x, x'; \phi)$ is continuous in $x'$ for every fixed $x$ and $\phi$;
    \item The classifier score $f_\theta(x')$ is continuous in both $x'$ and $\theta$, and the loss function $\mathcal{L}$ is continuous.
\end{enumerate}
Then the Pro-SF game admits at least one Stackelberg equilibrium $(\theta^*, b^*(\cdot))$.
\end{proposition}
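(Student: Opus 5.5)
Our plan is a two-stage argument: first solve the follower's problem pointwise, then solve the leader's problem by Weierstrass. The one real difficulty is choosing the follower's response measurably and continuously enough in $\theta$ that the leader's objective is lower semicontinuous.

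\textbf{Step 1 (follower).} Fix $\theta\in\Theta$ and $x\in\mathcal{X}$. By (iii), $x'\mapsto U_P(x,x';\phi)$ is continuous on the compact set $\mathcal{X}$, where $p(x')$ is the model-implied acceptance probability under $f_\theta$. By Weierstrass, the argmax set
\[
M(x,\theta):=\arg\max_{x'\in\mathcal{X}}U_P(x,x';\phi)
\]
is nonempty and compact. To get a measurable selection $b(\cdot;\theta)$ we will use the Kuratowski--Ryll-Nardzewski theorem, or simply Berge's measurable maximum theorem. These require $U_P$ to be jointly measurable in $(x,x')$, which holds because $c$ and the quantizer defining $r$ are Borel. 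This yields a valid follower response for every $\theta$.

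\textbf{Step 2 (leader).} Define $F(\theta)$ as in \eqref{eq:leader_obj}. We want $\theta^*\in\arg\min_\Theta F$, and by compactness of $\Theta$ from (ii) it suffices that $F$ be lower semicontinuous. The dependence of $U_P$ on $\theta$ is through $p(x')$, which is continuous in $\theta$ by (iv). Berge's maximum theorem therefore gives that $\theta\mapsto M(x,\theta)$ is upper hemicontinuous with compact values.

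Defining $F$ via an arbitrary selection breaks semicontinuity. We will instead use the optimistic (strong Stackelberg) convention
\[
F(\theta):=\mathbb{E}\Big[\min_{x'\in M(x,\theta)}\mathcal{L}(f_\theta(x'),y)\Big].
\]
The inner map $(x',\theta)\mapsto\mathcal{L}(f_\theta(x'),y)$ is jointly continuous by (iv). A minimum of a jointly continuous function over an upper hemicontinuous compact-valued correspondence is lower semicontinuous in $\theta$. The loss is bounded on the compact set, so Fatou's lemma passes this lower semicontinuity through the expectation.

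\textbf{Step 3 (assembly).} Pick $\theta^*$ minimizing $F$. Let $b^*(x)$ be a measurable selection attaining the inner minimum at $\theta^*$, again by the measurable maximum theorem. Follower optimality holds since $b^*(x)\in M(x,\theta^*)$. For leader optimality we read the definition, as is standard, with $b^*$ understood as the response map $\theta\mapsto b^*(\cdot;\theta)$. Then for every $\theta$,
\[
F(\theta^*)\le F(\theta)\le \mathbb{E}\big[\mathcal{L}(f_\theta(b(x;\theta)),y)\big]
\]
for any admissible response $b(\cdot;\theta)$, which is exactly leader optimality.

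\textbf{Main obstacle.} The hard step is the semicontinuity in Step 2. The argmax correspondence is only upper hemicontinuous, and the reference quantizer $\lfloor K s(x)\rfloor$ is discontinuous. The quantizer is harmless because it depends only on the fixed $x$, not on $x'$ or $\theta$, so we need (iii) only in $x'$ plus continuity of $p$ in $\theta$. If the pessimistic tie-breaking convention were required, lower semicontinuity could fail. In that case we would fall back on the optimistic equilibrium, or assume unique follower maximizers, which makes $b_P$ continuous in $\theta$ and $F$ continuous.
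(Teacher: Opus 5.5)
Your proposal is correct, and it departs from the paper's proof exactly where care is needed. Both arguments share the same skeleton: Weierstrass on compact $\mathcal{X}$ for the follower, Berge's theorem for upper hemicontinuity of $\theta\mapsto M(x,\theta)$, then Weierstrass on compact $\Theta$ for the leader. The paper asserts that, under a measurable selection $b_P(x;\theta)$, the map $\theta\mapsto f_\theta(b_P(x;\theta))$ is continuous, and it concludes by dominated convergence that $F$ is continuous. That does not follow. Upper hemicontinuity allows the argmax to jump from one point to another at parameters where the follower is indifferent, for example between staying put and crossing the boundary; the concave cost $(\lambda c)^{\beta}$ makes this situation common. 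For atomic distributions, such as the empirical ones used in training, $F$ can then be discontinuous for every selection. You instead fix optimistic tie-breaking and prove only lower semicontinuity: the minimum of a jointly continuous function over a compact-valued u.h.c.\ correspondence is l.s.c., and Fatou carries this through the expectation. That is all the leader's Weierstrass step requires. If the paper's route were valid it would give existence for any tie-breaking rule, but in fact it needs an extra hypothesis such as a unique follower maximizer, which is your stated fallback. Your route works under the stated hypotheses, at the cost of delivering a strong (optimistic) Stackelberg equilibrium. Likewise, the paper's Step 3 identifies $F(\theta)$ with $\mathbb{E}[\mathcal{L}(f_\theta(b^*(x)),y)]$ for the \emph{fixed} map $b^*$, which is a different objective. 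Your explicit reading of the leader-optimality clause, with $b^*$ as the response map $\theta\mapsto b^*(\cdot;\theta)$, is what makes your closing inequality chain legitimate. Under the literal fixed-$b^*$ reading, neither argument establishes that clause, because it becomes a simultaneous-move fixed-point condition. One caveat you share with the paper: Berge's theorem needs $U_P$ jointly continuous in $(x',\theta)$. You should therefore state as assumptions, not facts, that (iv) means joint continuity and that $s(x)$, and hence $r$, does not depend on $\theta$. If $s$ were the current model's own score, the floor in the quantizer would make $U_P$ discontinuous in $\theta$, and upper hemicontinuity of $M$ could fail.
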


\begin{proof}
We establish existence in three steps.

\noindent\textbf{Step 1: Existence of the follower's best response.}
Fix any $\theta \in \Theta$ and $x \in \mathcal{X}$.
We verify that $U_P(x, x'; \phi)$ satisfies condition~(iii) by inspecting each component 
in Eq.~\eqref{eq:cpt-binary-utility}:
\begin{itemize}[leftmargin=*]
    \item The classifier score $p(x') = \sigma(f_\theta(x'))$ is continuous in $x'$ by 
    condition~(iv) and the continuity of the sigmoid $\sigma$;
    \item The value function $v(\cdot)$ in Eq.~\eqref{loss} is a power function with 
    exponents $\alpha, \beta \in (0,1]$, hence continuous on $\mathbb{R}_{\geq 0}$;
    \item The probability-weighting functions $w^+(\cdot)$ and $w^-(\cdot)$ in 
    Eq.~\eqref{Probability} are smooth for all $\gamma \in (0, 1]$;
    \item The reference point $r \in [0,1]$ is a fixed scalar determined by 
    Eq.~\eqref{refer}, independent of $x'$;
    \item The manipulation cost $c(x, x')$ (e.g., Mahalanobis distance) is continuous in 
    $x'$ for fixed $x$.
\end{itemize}
Since $U_P(x, x'; \phi)$ is a finite composition of continuous functions, it is continuous 
in $x'$.
By condition~(i), $\mathcal{X}$ is compact.
Applying the Weierstrass extreme-value theorem, the maximum
\begin{equation}
    \max_{x' \in \mathcal{X}}\; U_P(x, x'; \phi)
\end{equation}
is attained, so the argmax set is non-empty:
\begin{equation}
    b_P(x;\theta) \;:=\; \arg\max_{x' \in \mathcal{X}}\; U_P(x, x'; \phi) 
    \;\neq\; \varnothing.
\end{equation}
Hence the follower's best-response function $b_P(x;\theta)$ exists for every $x \in 
\mathcal{X}$ and $\theta \in \Theta$.

\noindent\textbf{Step 2: Existence of the leader's optimum.}
Given the follower's best-response mapping $b_P(\cdot;\theta)$ established in Step 1, the 
leader's objective is:
\begin{equation}
    F(\theta) \;=\; \mathbb{E}_{(x,y)\sim\mathcal{D}}\!\left[
    \mathcal{L}\!\left(f_\theta(b_P(x;\theta)),\, y\right)\right].
\end{equation}
We show that $F$ is continuous on $\Theta$.
By the maximum theorem (Berge, 1963), since $U_P(x, x'; \phi)$ is continuous in $(x', 
\theta)$ and $\mathcal{X}$ is compact, the correspondence $\theta \mapsto \arg\max_{x'} 
U_P(x, x'; \phi)$ is upper hemicontinuous, and the value function $\theta \mapsto \max_{x'} 
U_P(x, x'; \phi)$ is continuous in $\theta$.
Under a measurable selection of $b_P(x;\theta)$, the composed map $\theta \mapsto 
f_\theta(b_P(x;\theta))$ is continuous in $\theta$ by condition~(iv).
Since $\mathcal{L}$ is continuous and the expectation preserves continuity (by the dominated 
convergence theorem, as $\mathcal{L}$ is bounded), $F(\theta)$ is continuous on $\Theta$.
Since $\Theta$ is compact by condition~(ii), the Weierstrass theorem guarantees the 
existence of
\begin{equation}
    \theta^* \;\in\; \arg\min_{\theta \in \Theta}\; F(\theta).
\end{equation}

\noindent\textbf{Step 3: Construction of the equilibrium.}
Let $b^*(x) := b_P(x;\theta^*)$ for every $x \in \mathcal{X}$.
By Step 1, $b^*(x) \in \arg\max_{x'} U_P(x,x';\phi)$ for every $x$, so follower optimality holds.
By Step 2, $\theta^*$ minimizes $F(\theta) = \mathbb{E}[\mathcal{L}(f_\theta(b^*(x)), y)]$ over $\Theta$, so leader optimality holds.
Therefore, $(\theta^*, b^*(\cdot))$ is a Stackelberg equilibrium of the Pro-SF game.
\end{proof}

\begin{remark}
Condition~(i) is naturally satisfied in practice: real-world features (income, credit score, 
health metrics) all lie within known finite ranges. Condition~(ii) is enforced by standard 
regularization in classifier training. Conditions~(iii) and~(iv) hold for all classifiers 
and cost functions used in our experiments.
\end{remark}

\subsection{Proposition: Local Stability of Stackelberg Equilibrium}

To analyze stability, we introduce the local best-response mappings of the two players.
Given an equilibrium $(\theta^*, b^*)$, suppose there exist neighborhoods 
$\mathcal{U} \subset \Theta$ of $\theta^*$ and $\mathcal{V}$ of $b^*$ (in an appropriate 
function space) such that:
\begin{itemize}[leftmargin=*]
    \item The follower's best response is locally single-valued and continuously 
    differentiable in $\theta$, defining a mapping $\Phi: \mathcal{U} \to \mathcal{V}$, 
    $\theta \mapsto b_\theta$, where $b_\theta(x) = \Phi(\theta)(x)$;
    \item The leader's best response to a given follower mapping $b$ is locally 
    single-valued and continuously differentiable in $b$, defining a mapping 
    $\Psi: \mathcal{V} \to \mathcal{U}$, $b \mapsto \theta_b$.
\end{itemize}
Since $(\theta^*, b^*)$ is a Stackelberg equilibrium, we have $b^* = \Phi(\theta^*)$ and 
$\theta^* = \Psi(b^*)$.
Define the composed mapping:
\begin{equation}
    T \;:=\; \Psi \circ \Phi \;:\; \mathcal{U} \;\to\; \mathcal{U}, 
    \qquad T(\theta) = \Psi(\Phi(\theta)).
    \label{eq:composed_map}
\end{equation}
Note that $\theta^*$ is a fixed point of $T$:
\begin{equation}
    T(\theta^*) = \Psi(\Phi(\theta^*)) = \Psi(b^*) = \theta^*.
\end{equation}

\begin{proposition}[Local Stability]
\label{prop:stability}
Let $(\theta^*, b^*)$ be a Stackelberg equilibrium under Pro-SF, and let $T = \Psi \circ 
\Phi$ be the composed mapping defined in Eq.~\eqref{eq:composed_map}.
Suppose $T$ is locally contractive on $\mathcal{U}$: there exists $q \in (0, 1)$ such that 
for all $\theta_1, \theta_2 \in \mathcal{U}$,
\begin{equation}
    \left\|T(\theta_1) - T(\theta_2)\right\| \;\leq\; q\,\left\|\theta_1 - \theta_2\right\|.
    \label{eq:contraction}
\end{equation}
Then $(\theta^*, b^*)$ is locally stable: for any initialization $\theta_0 \in \mathcal{U}$, 
the best-response iteration
\begin{equation}
    \theta_{t+1} \;=\; T(\theta_t), \qquad b_t \;:=\; \Phi(\theta_t),
    \label{eq:iteration}
\end{equation}
satisfies $\theta_t \to \theta^*$ and $b_t \to b^*$ as $t \to \infty$, with geometric rate:
\begin{equation}
    \left\|\theta_t - \theta^*\right\| \;\leq\; q^t\,\left\|\theta_0 - \theta^*\right\|.
\end{equation}
\end{proposition}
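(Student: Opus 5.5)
The argument is the Banach fixed-point / contraction argument, run locally on $\mathcal{U}$. The plan is as follows.

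\textbf{Step 1 (the iteration stays in $\mathcal{U}$).} I will first show that the best-response iteration in Eq.~\eqref{eq:iteration} is well defined, i.e., $\theta_t\in\mathcal{U}$ for all $t$.
\begin{itemize}
\item Without loss of generality, shrink $\mathcal{U}$ to a closed ball $\bar B(\theta^*,\rho)\subset\mathcal{U}$. This is harmless, since the contraction inequality~\eqref{eq:contraction} continues to hold on any subset.
\item By the fixed-point identity $T(\theta^*)=\theta^*$ established just before the statement, for any $\theta$ in the ball,
\[
\|T(\theta)-\theta^*\| = \|T(\theta)-T(\theta^*)\| \le q\|\theta-\theta^*\| \le q\rho < \rho .
\]
\item Hence $T$ maps the ball into itself. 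By induction, every iterate $\theta_t$ remains in the ball, and therefore in the domain of $\Phi$ and $\Psi$.
\end{itemize}

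\textbf{Step 2 (geometric rate).} Apply the same one-line estimate at each step:
\[
\|\theta_{t+1}-\theta^*\| = \|T(\theta_t)-T(\theta^*)\| \le q\|\theta_t-\theta^*\|.
\]
Unrolling this gives $\|\theta_t-\theta^*\|\le q^t\|\theta_0-\theta^*\|$. Since $q\in(0,1)$, it follows that $\theta_t\to\theta^*$.

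\textbf{Step 3 (convergence of the follower responses).} We have $b_t=\Phi(\theta_t)$ and $b^*=\Phi(\theta^*)$, where $\Phi$ is assumed continuously differentiable, hence continuous, on $\mathcal{U}$ into the function space carrying $\mathcal{V}$. Continuity of $\Phi$ then gives $b_t\to b^*$ in that topology. If a rate is desired, it comes from local Lipschitzness of $\Phi$ on the compact ball, which follows from $C^1$ with a mean-value bound: $\|b_t-b^*\|\le L_\Phi q^t\|\theta_0-\theta^*\|$.

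\textbf{Main obstacle.} The only subtle point is the invariance of $\mathcal{U}$ under $T$. The statement assumes $T:\mathcal{U}\to\mathcal{U}$, but a general neighborhood need not be invariant, and the iteration could otherwise exit the region where $\Phi$ and $\Psi$ are single-valued. I handle this by the restriction to a ball in Step 1. Under that restriction, "for any initialization $\theta_0\in\mathcal{U}$" is read as $\theta_0$ in this ball, or in all of $\mathcal{U}$ if invariance is taken as part of the hypothesis, as the typing $T:\mathcal{U}\to\mathcal{U}$ suggests.
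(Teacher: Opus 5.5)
Your proof is correct and is essentially the paper's argument: the paper invokes the Banach fixed-point theorem on $\mathcal{U}$ (asserting $\mathcal{U}$ is closed, hence complete) to get $\|\theta_t-\theta^*\|\le q^t\|\theta_0-\theta^*\|$, then uses continuity of $\Phi$ for $b_t\to b^*$. Your direct estimate $\|\theta_{t+1}-\theta^*\|\le q\|\theta_t-\theta^*\|$ against the already-known fixed point gives the same bound without needing completeness, and your ball-invariance step is harmless but unnecessary, since $T=\Psi\circ\Phi$ maps $\mathcal{U}$ into itself by construction in the paper's setup.
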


\begin{proof}
We establish the result in four steps.

\noindent\textbf{Step 1: $\theta^*$ is a fixed point of $T$.}
Since $(\theta^*, b^*)$ is a Stackelberg equilibrium:
\begin{equation}
    b^* = \Phi(\theta^*), \qquad \theta^* = \Psi(b^*).
\end{equation}
Therefore:
\begin{equation}
    T(\theta^*) = (\Psi \circ \Phi)(\theta^*) = \Psi(b^*) = \theta^*.
\end{equation}
Hence $\theta^*$ is a fixed point of $T$ in $\mathcal{U}$.

\noindent\textbf{Step 2: Convergence of the leader by Banach fixed-point theorem.}
Since $T: \mathcal{U} \to \mathcal{U}$ is a contraction with constant $q \in (0,1)$ on the 
complete metric space $\mathcal{U}$ (as a closed subset of $\mathbb{R}^d$), the Banach 
fixed-point theorem guarantees:
\begin{enumerate}[label=(\alph*)]
    \item $T$ has a \emph{unique} fixed point in $\mathcal{U}$, which is $\theta^*$;
    \item For any $\theta_0 \in \mathcal{U}$, the iteration $\theta_{t+1} = T(\theta_t)$ 
    converges geometrically:
    \begin{equation}
        \left\|\theta_t - \theta^*\right\| \;\leq\; q^t\,
        \left\|\theta_0 - \theta^*\right\| \;\to\; 0 \quad \text{as } t \to \infty.
    \end{equation}
\end{enumerate}

\noindent\textbf{Step 3: Convergence of the follower.}
Since $\Phi: \mathcal{U} \to \mathcal{V}$ is continuous (by the assumption that the 
follower's best response is continuously differentiable in $\theta$), the geometric 
convergence $\theta_t \to \theta^*$ implies:
\begin{equation}
    b_t \;:=\; \Phi(\theta_t) \;\to\; \Phi(\theta^*) \;=\; b^* 
    \quad \text{as } t \to \infty.
\end{equation}

\noindent\textbf{Step 4: Conclusion.}
Combining Steps 2 and 3, we have $(\theta_t, b_t) \to (\theta^*, b^*)$ geometrically as 
$t \to \infty$, for any initialization $\theta_0 \in \mathcal{U}$.
This establishes that the best-response dynamics return to $(\theta^*, b^*)$ from any 
sufficiently close initialization, i.e., the Stackelberg equilibrium is locally stable.
\end{proof}

\begin{remark}
The contraction condition~\eqref{eq:contraction} is interpretable in terms of the underlying problem structure: it holds when the manipulation cost is sufficiently strong relative to the gain from manipulation in a neighborhood of the equilibrium, so that small perturbations to the classifier induce only small perturbations to agents' responses, and vice versa.
This is a standard sufficient condition for stability in Stackelberg games and leader-follower systems~\cite{li2017review}.
\end{remark}

\subsection{Corollary: Local Uniqueness of Stackelberg Equilibrium}

\begin{corollary}[Local Uniqueness]
\label{cor:uniqueness}
Under the conditions of Proposition~\ref{prop:stability}, $(\theta^*, b^*)$ is the unique Stackelberg equilibrium in $\mathcal{U}$.
\end{corollary}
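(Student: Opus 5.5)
The plan is to derive the corollary directly from the fixed-point structure behind Proposition~\ref{prop:stability}, using the uniqueness half of the Banach argument rather than the convergence half. The first step is to reduce equilibria to fixed points of $T$. Let $(\tilde\theta,\tilde b)$ be any Stackelberg equilibrium with $\tilde\theta\in\mathcal{U}$ and $\tilde b\in\mathcal{V}$. Follower optimality together with the local single-valuedness of the follower's best response on $\mathcal{U}$ forces $\tilde b=\Phi(\tilde\theta)$. Leader optimality together with the local single-valuedness of the leader's best response on $\mathcal{V}$ forces $\tilde\theta=\Psi(\tilde b)$. Hence $T(\tilde\theta)=\Psi(\Phi(\tilde\theta))=\tilde\theta$, exactly as in Step~1 of the proof of Proposition~\ref{prop:stability}.

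The second step is uniqueness of the leader component. Both $\theta^*$ and $\tilde\theta$ are fixed points of $T$ in $\mathcal{U}$. Applying the contraction inequality~\eqref{eq:contraction} gives
\[
\|\tilde\theta-\theta^*\|=\|T(\tilde\theta)-T(\theta^*)\|\le q\,\|\tilde\theta-\theta^*\|.
\]
Since $q<1$, this forces $\|\tilde\theta-\theta^*\|=0$, so $\tilde\theta=\theta^*$. This bypasses any need for completeness of $\mathcal{U}$, since only the inequality is used.

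The third step recovers the follower component. Since $\tilde\theta=\theta^*$, we get $\tilde b=\Phi(\tilde\theta)=\Phi(\theta^*)=b^*$. So the equilibrium pair coincides with $(\theta^*,b^*)$, which is the claim.

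The main obstacle is the first step. It is the only place where the definition of Stackelberg equilibrium (an argmax/argmin condition) has to be converted into the functional identities $\tilde b=\Phi(\tilde\theta)$ and $\tilde\theta=\Psi(\tilde b)$. This conversion relies on interpreting "unique in $\mathcal{U}$" as "unique among equilibria whose components lie in $\mathcal{U}\times\mathcal{V}$", where the local best-response maps are single-valued. I would state this interpretation explicitly, so that an equilibrium follower response outside $\mathcal{V}$ is not accidentally admitted.
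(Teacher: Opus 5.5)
Your proposal is correct and follows the paper's proof. Like the paper, you reduce any equilibrium in $\mathcal{U}\times\mathcal{V}$ to a fixed point of $T=\Psi\circ\Phi$ via follower and leader optimality, conclude $\tilde\theta=\theta^*$, and recover $\tilde b=\Phi(\theta^*)=b^*$; the only difference is that you get uniqueness of the fixed point directly from the contraction inequality instead of citing the Banach fixed-point theorem, which is slightly cleaner since it needs neither completeness of $\mathcal{U}$ nor $T(\mathcal{U})\subseteq\mathcal{U}$.
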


\begin{proof}
Suppose $(\theta, b)$ is any Stackelberg equilibrium contained in $\mathcal{U} \times \mathcal{V}$.
By follower optimality, $b = \Phi(\theta)$.
By leader optimality, $\theta = \Psi(b)$. Substituting,
\begin{equation}
    \theta \;=\; \Psi(b) \;=\; \Psi(\Phi(\theta)) \;=\; T(\theta).
\end{equation}
Hence $\theta$ is a fixed point of $T$ in $\mathcal{U}$.
By Step 2(a) of Proposition~\ref{prop:stability}, the Banach fixed-point theorem guarantees that $T$ has a unique fixed point in $\mathcal{U}$, which is $\theta^*$.
Therefore $\theta = \theta^*$, and consequently:
\begin{equation}
    b \;=\; \Phi(\theta) \;=\; \Phi(\theta^*) \;=\; b^*.
\end{equation}
Hence $(\theta, b) = (\theta^*, b^*)$, establishing that $(\theta^*, b^*)$ is the unique Stackelberg equilibrium in $\mathcal{U} \times \mathcal{V}$.
\end{proof}

\section{Additional Experiment}
\label{AppD}
\subsection{Dataset} 
\label{AppD1}
We evaluate our framework on five datasets, including four real-world and one synthetic benchmarks: 
\begin{itemize}
    \item \textbf{Credit}~\cite{default_of_credit_card_clients_350}: Credit card default prediction based on financial records.
    \item \textbf{Adult}~\cite{adult_2}: Income classification from census features.
    \item \textbf{Diabetes}~\cite{Teboul2015Diabetes}: A medical dataset containing clinical and demographic attributes for diabetes risk assessment.
    \item \textbf{German}~\cite{statlog_(german_credit_data)_144}: Credit risk classification based on personal and financial profiles.
    \item \textbf{Spam}~\cite{spambase_94}: Email spam detection based on textual and statistical features.
    \item \textbf{Synthetic}~\cite{lopez2016paysim}: Simulated mobile transaction data for fraud detection.
\end{itemize}

\begin{table}[t]
\centering
\caption{Summary of the public datasets used in our experiments.}
\label{tab:dataset_summary}
\begin{tabular}{lccc}
\toprule
Dataset & \#Features & \#Instances & \#Classes \\
\midrule
Adult~\cite{adult_2} & 14 & 48{,}842 & 2 \\
Credit~\cite{default_of_credit_card_clients_350} & 23 & 30{,}000 & 2 \\
German (Statlog)~\cite{statlog_(german_credit_data)_144} & 20 & 1{,}000 & 2 \\
Spambase~\cite{spambase_94} & 57 & 4{,}601 & 2 \\
CDC Diabetes~\cite{Teboul2015Diabetes} & 21 & 253{,}680 & 2 \\
Synthetic (PaySim)~\cite{lopez2016paysim} & 10 & 6{,}362{,}620 & 2 \\
\bottomrule
\end{tabular}
\end{table}

\subsection{Notation Table}
\label{AppF}

In the modeling process of Section~\ref{sec:method}, we designed a series of formulas and parameters. Here, we provide a notation table~(Tab.~\ref{notationtab}) to facilitate a clearer understanding of our Pro-SF.

\begin{table}[t]
\centering
\caption{Notation of prospect-guided utility and default experimental values.}
\label{notationtab}
\begin{tabular}{llc}
\toprule
\textbf{Symbol} & \textbf{Meaning} & \textbf{Default Value (Experiment)} \\
\midrule
$\alpha, \beta$ & Curvature parameters~(diminishing sensitivity) & $\alpha = 0.8,\ \beta = 0.7$ \\
$\kappa$        & Loss aversion coefficient  & $2.25$ \\
$\gamma$        & Probability distortion parameter  & $0.7$ \\
$r$            & Reference point (discretized probability) & $\{0,0.2,0.4,0.6,0.8\}$ \\
$w(p)$         & Probability weighting functions     &Inverse-S function (Eq.~\eqref{Probability}) \\
$c(\mathbf{x}',\mathbf{x})$       & Manipulation cost function            & Mahalanobis distance \\
\bottomrule
\end{tabular}
\end{table}

We preprocess each dataset following standard practice (categorical encoding, normalization, and train/validation/test splits).

\subsection{Implementation Details}
\label{AppD2}
All experiments are conducted on a single NVIDIA TITAN $V$~(24GB) GPU.
We implement the classifier $f$ as a logistic regression model trained with cross-entropy loss and learning rate $10^{-3}$. In prospect-theoretic utility, we set the curvature parameters $\alpha=0.8$ and $\beta = 0.7$, the loss aversion coefficient $\kappa=2.25$, and the probability-weighting parameter $\gamma=0.7$ by default. In the mixed agent behavior paradigm, the proportion is set as $\pi = 0.2$. This choice reflects a realistic scenario where only a minority of agents behave in a fully rational manner, while the majority exhibit behavioral biases~\cite{tversky1992advances,ariely2008predictably}.
Ablation studies are conducted on the \textit{Credit} and \textit{Synthetic} datasets under the \textit{mixed} agent regime, representing real-world and controlled settings.
To ensure robustness, we perform 10-fold cross-validation on all datasets.

In simulating agents’ real-world manipulation process, we argue that agents need to exhibit heterogeneous behavioral biases. 
To capture this diversity, we randomly divide agents into several subgroups (e.g., three or four groups), and assign each subgroup different parameter combinations. 
This design mimics the fact that individuals in reality adopt distinct subjective strategies when manipulating their features. 
By contrast, when training the classifier to anticipate manipulations, we adopt a fixed parameter setting across all agents. 
This stabilizes the optimization process and ensures fair comparison across experiments, while still allowing evaluation against heterogeneous agent behaviors at deployment.

\begin{examplebox}
    For example, in one experimental run, the agents are randomly split into three groups. The first group is assigned $(\kappa=2.00,\ \gamma=0.70)$, the second group $(\kappa=2.25,\ \gamma=0.75)$, and the third group $(\kappa=1.80,\ \gamma=0.65)$. 
\end{examplebox}

\subsection{More Ablation Studies}
\label{AppD3}
\textbf{More Ablation results} of our parameter analysis~(including $\pi, r, (\alpha, \beta), \kappa$, and $\gamma$) are shown in Table~\ref{tab5} and Fig.~\ref{fig7}.

\textbf{An alternative of probability distortion.} The two-parameter \emph{Prelec} function~\cite{Prelec1998} is :
\begin{equation}
w(p) \;=\; \exp\!\big(-\eta(-\ln p)^{\phi}\big), \qquad \phi,\eta>0.
\end{equation}
Compared to the one-parameter form used in the main text, the Prelec function provides greater flexibility. 
The parameter $\phi$ controls the shape of the curve: when $\phi<1$, the function takes an inverse-$S$ form (small probabilities are given too much weight and large probabilities too little), while $\phi>1$ produces an $S$-shape (the opposite pattern). In line with behavioral evidence, we restrict to the case $\phi<1$, which produces the inverse-$S$ form.
The parameter $\eta$ adjusts how strong this distortion is overall, with larger $\eta$ leading to a stronger down-weighting of probabilities across the board.

The ablation experimental results are summarized in Table~\ref{tab8}.

\begin{table}[t]
\centering
\caption{Overall accuracy (\%) of rational-based classifiers and prospect-guided models across datasets under different $\pi$ in mixed behavior.}
\label{tab5}
\renewcommand{\arraystretch}{1.1} 
\begin{threeparttable}
\begin{tabular}{lccccccc}
\toprule
\multirow{2}{*}{\textbf{Classifier}} & \multirow{2}{*}{\textbf{Mixed $\pi$}} & \multicolumn{6}{c}{\textbf{Datasets}} \\ \cmidrule{3-8} 
                                      &                                       & \textit{Adult} & \textit{Credit} & \textit{Diabetes} & \textit{German} & \textit{Spam} & \textit{Synthetic} \\\midrule
\multirow{3}{*}{\textit{Rational-based}} 
& \textit{0.1} & $73.96_{\pm1.55}$ & $71.84_{\pm1.90}$ & $66.51_{\pm1.62}$ & $70.10_{\pm1.88}$ & $78.85_{\pm1.71}$ & $72.88_{\pm1.70}$ \\
& \textit{0.2} & $74.31_{\pm1.51}$ & $72.12_{\pm1.87}$ & $66.81_{\pm1.65}$ & $70.43_{\pm1.92}$ & $79.16_{\pm1.63}$ & $73.15_{\pm1.74}$ \\
& \textit{0.4} & $74.53_{\pm1.57}$ & $72.37_{\pm1.85}$ & $67.05_{\pm1.68}$ & $70.64_{\pm1.95}$ & $79.42_{\pm1.59}$ & $73.41_{\pm1.73}$ \\
\midrule
\multirow{3}{*}{\makecell[c]{\textbf{\textit{Pro-SF~(ours)}}}}
& \textit{0.1} & $\textbf{78.92}_{\pm1.74}$ & $\textbf{79.32}_{\pm1.66}$ & $\textbf{72.23}_{\pm1.55}$ & $\textbf{76.45}_{\pm1.68}$ & $\textbf{86.02}_{\pm1.61}$ & $\textbf{82.60}_{\pm1.58}$ \\
& \textit{0.2} & $\textbf{78.68}_{\pm1.77}$ & $\textbf{79.13}_{\pm1.63}$ & $\textbf{72.01}_{\pm1.58}$ & $\textbf{76.24}_{\pm1.71}$ & $\textbf{85.71}_{\pm1.52}$ & $\textbf{82.34}_{\pm1.56}$ \\
& \textit{0.4} & $\textbf{78.50}_{\pm1.79}$ & $\textbf{78.92}_{\pm1.65}$ & $\textbf{71.84}_{\pm1.60}$ & $\textbf{76.08}_{\pm1.73}$ & $\textbf{85.38}_{\pm1.56}$ & $\textbf{82.14}_{\pm1.57}$ \\
\bottomrule
\end{tabular}
\end{threeparttable}
\end{table}

\begin{figure*}[t]
    \centering
    \hspace{-0.866em}
    \subfloat[$r\! \in \! \{0,0.2,0.4,0.6,0.8\}$]{\includegraphics[width=0.32\textwidth]{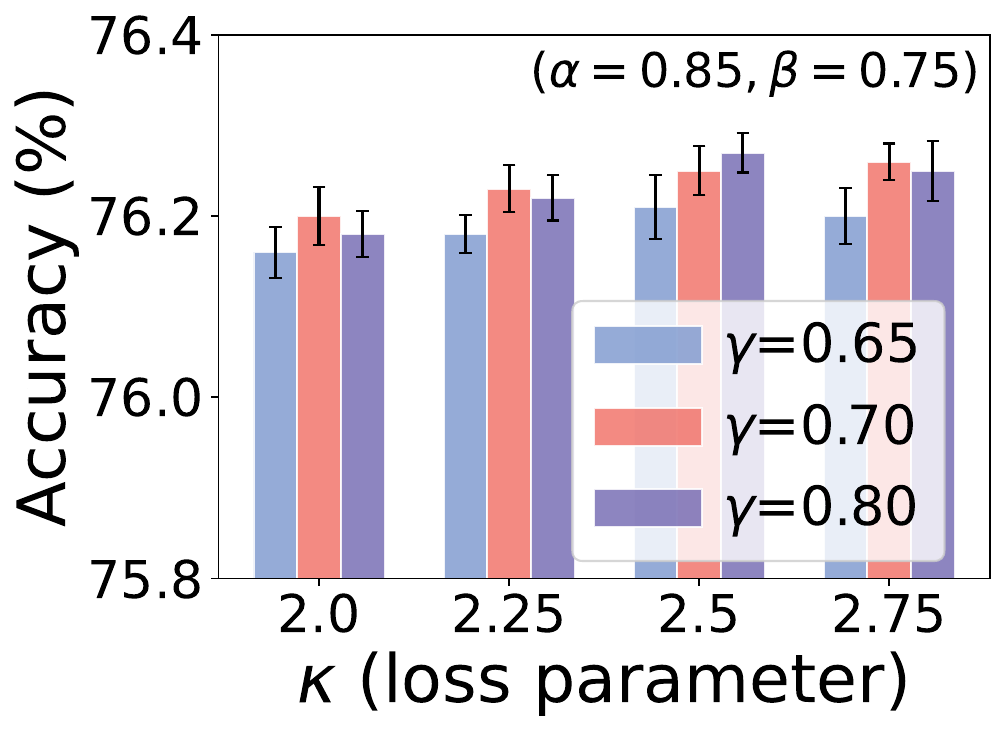}\label{fig6a}}
    \hspace{-0.3em}
    \subfloat[$r\! \in \! \{0,0.3,0.6,0.9\}$]{\includegraphics[width=0.32\textwidth]{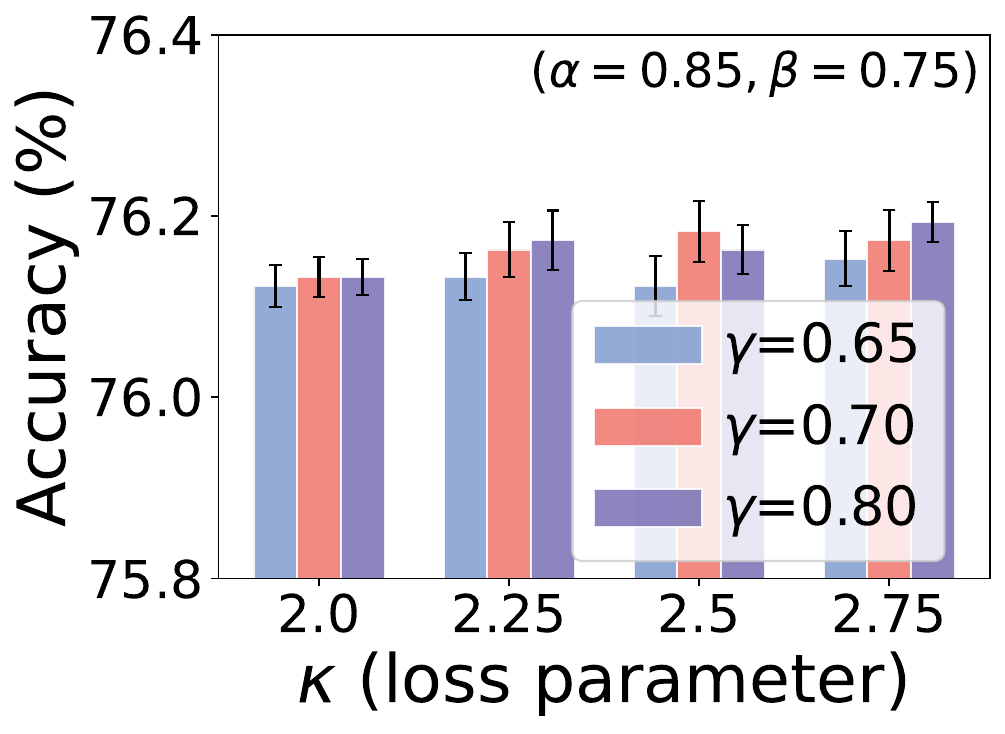}\label{fig6b}}
    \hspace{-0.3em}
    \subfloat[$r \in \{0,0.4,0.7\}$]{\includegraphics[width=0.32\textwidth]{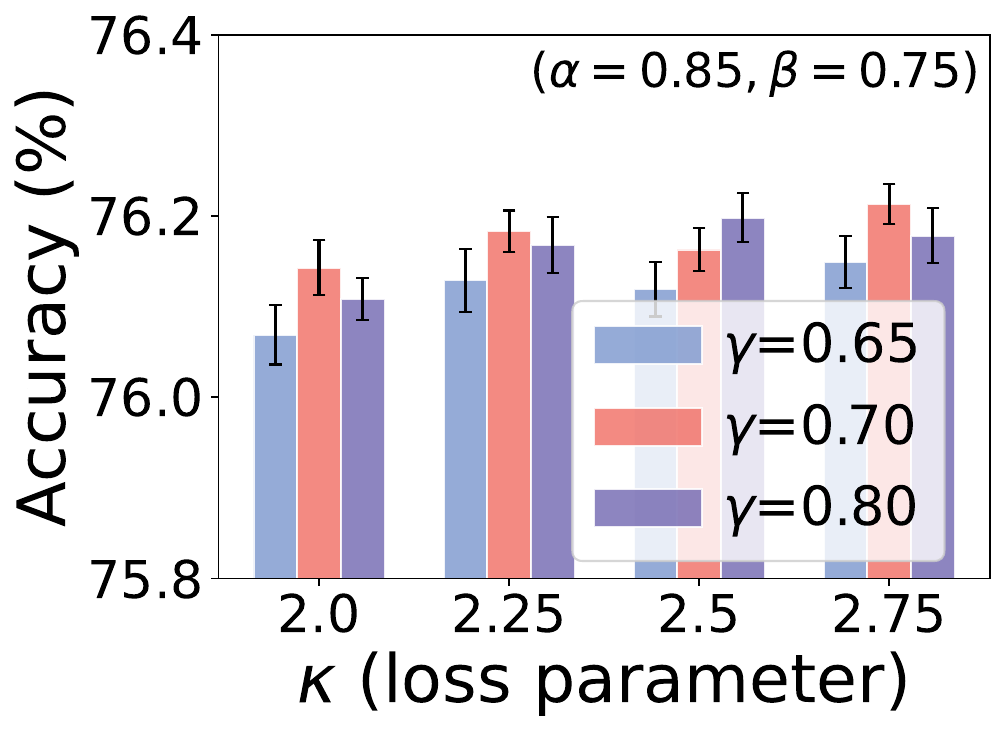}\label{fig6c}}\\
    \vskip 0.21in
    \hspace{-0.866em}
    \subfloat[$r\! \in \! \{0,0.2,0.4,0.6,0.8\}$]{\includegraphics[width=0.32\textwidth]{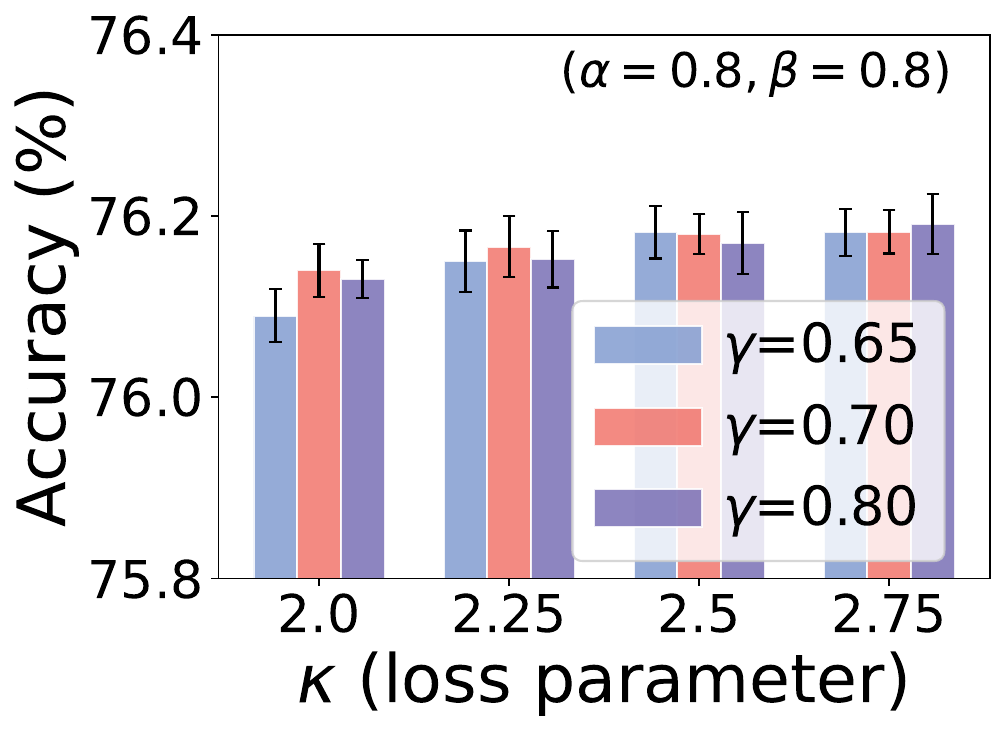}\label{fig7a}}
    \hspace{-0.3em}
    \subfloat[$r\! \in \! \{0,0.3,0.6,0.9\}$]{\includegraphics[width=0.32\textwidth]{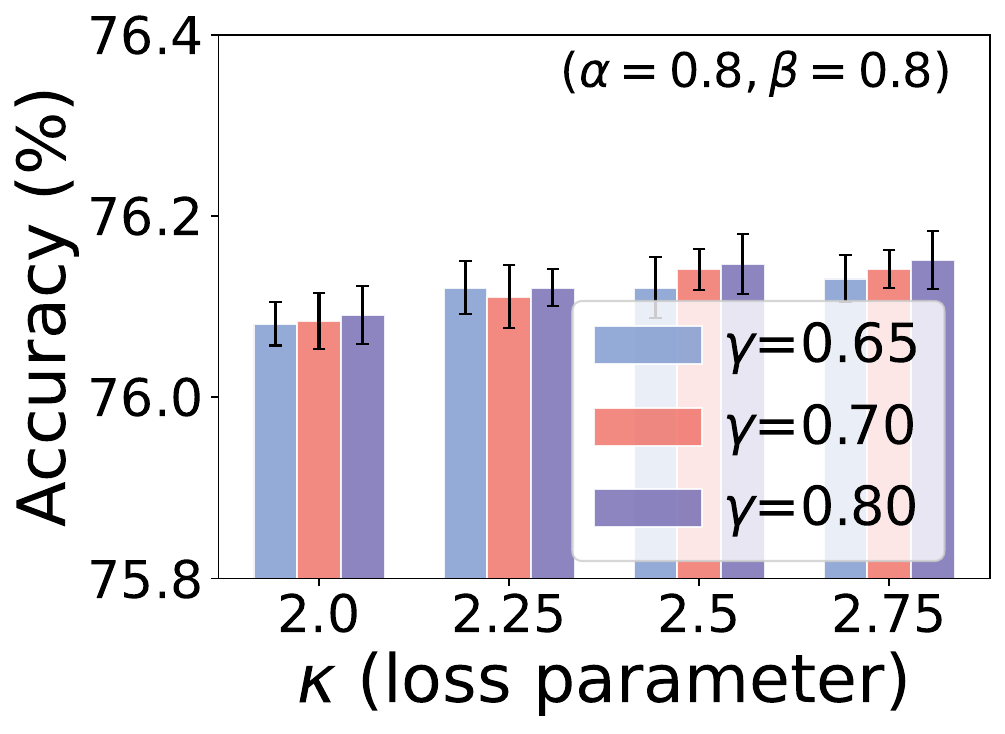}\label{fig7b}}
    \hspace{-0.3em}
    \subfloat[$r \in \{0,0.4,0.7\}$]{\includegraphics[width=0.32\textwidth]{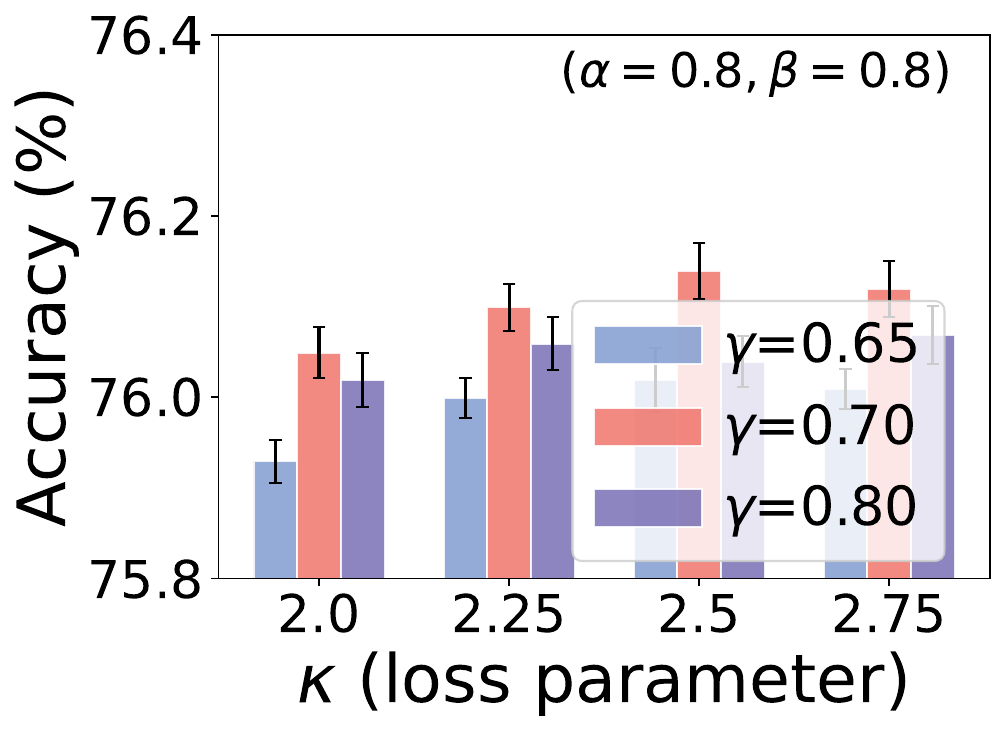}\label{fig7c}}
    \hspace{-0.666em}
    \caption{Parameter ablation results with parameters \( \kappa, \gamma, r \) with different $(\alpha, \beta)$.}
    \label{fig7}
\end{figure*}

\begin{table}[t]
\centering
\caption{Accuracy comparison between the normalized power function and the Prelec probability weighting across multiple datasets under different parameter settings.}
\label{tab8}
\renewcommand{\arraystretch}{1.1}
\begin{tabular}{lcccccc}
\toprule
\multirow{2}{*}{\textbf{Weighting}} & \multicolumn{6}{c}{\textbf{Datasets}} \\
\cmidrule(lr){2-7}
& \textit{Credit} & \textit{Adult} & \textit{Diabetes} & \textit{German} & \textit{Spam} & \textit{Synthetic} \\
\midrule
$\gamma=0.65$ 
& $78.68_{\pm1.25}$ & $79.13_{\pm1.10}$ & $72.01_{\pm1.05}$ & $76.24_{\pm1.20}$ 
& $85.71_{\pm1.18}$ & $82.34_{\pm1.15}$ \\
Prelec $(\phi=0.65,\ \eta=1.00)$ 
& $78.50_{\pm1.28}$ & $79.00_{\pm1.12}$ & $71.80_{\pm1.08}$ & $76.10_{\pm1.18}$ 
& $85.42_{\pm1.21}$ & $82.10_{\pm1.14}$ \\
\midrule
$\gamma=0.70$ 
& $78.78_{\pm1.22}$ & $79.22_{\pm1.09}$ & $72.12_{\pm1.06}$ & $76.31_{\pm1.17}$ 
& $85.89_{\pm1.16}$ & $82.27_{\pm1.13}$ \\
Prelec $(\phi=0.68,\ \eta=1.00)$ 
& $78.70_{\pm1.24}$ & $79.10_{\pm1.10}$ & $71.96_{\pm1.07}$ & $76.20_{\pm1.15}$ 
& $85.60_{\pm1.19}$ & $82.15_{\pm1.12}$ \\
\midrule
$\gamma=0.80$ 
& $78.82_{\pm1.20}$ & $79.24_{\pm1.08}$ & $72.18_{\pm1.05}$ & $76.33_{\pm1.15}$ 
& $86.02_{\pm1.15}$ & $82.12_{\pm1.11}$ \\
Prelec $(\phi=0.75,\ \eta=1.00)$ 
& $78.79_{\pm1.22}$ & $79.20_{\pm1.09}$ & $72.05_{\pm1.06}$ & $76.30_{\pm1.14}$ 
& $85.78_{\pm1.17}$ & $82.08_{\pm1.10}$ \\
\bottomrule
\end{tabular}
\end{table}

\subsection{Results Analysis}

Additional parameter analysis results are summarized in Table~\ref{tab5} and Fig.~\ref{fig7}, covering the mixture ratio $\pi$, the reference point $r$, the weighting parameters $(\alpha,\beta)$, the loss-aversion factor $\kappa$, and the probability distortion parameter $\gamma$. 
As shown in Table~\ref{tab5}, the proposed Pro-SF consistently outperforms the rational baseline across datasets, and the performance advantage holds for different proportions of mixed behavioral agents. 

As shown in Fig.~\ref{fig7}, when the discrete reference points are set to four or five bins, the accuracy remains stable, showing that the framework is robust under reasonably fine discretizations. 
However, when only three bins are used, we observe a slight performance drop. This is because with only three bins, the reference points become too sparse, forcing many agents with different true self-assessments to be grouped into the same anchor. As a result, the model cannot capture the finer granularity of subjective evaluations, and the induced manipulations are less accurately represented. This mismatch slightly weakens the behavioral modeling and leads to a small drop in accuracy.

Moreover, Fig.~\ref{fig7} show ablations under two different $(\alpha,\beta)$ settings. In both cases, the accuracy curves remain stable, indicating that the results are not sensitive to moderate shifts of these parameters. Across a wide range of $\kappa$ and $\gamma$ values, accuracy fluctuations are small (within $\pm 0.3\%$), confirming that the framework is not overly dependent on fine-tuning these behavioral parameters.

Through extensive experimental design, we identified parameter settings of the normalized power function and the Prelec function that yield comparable weighting curves. 
As shown in Table~\ref{tab5}, when evaluated under these matched configurations, the resulting accuracies across datasets are nearly identical. For example, results with $r=0.7$ are largely similar to results with $\phi=0.68, \eta=1.0$.
This means our framework can work with both types of probability distortion, as long as the parameters are set to produce comparable shapes.

\section{Validation with Real World Manipulation Data}
\label{appendix:human-validation}

In this section, we provide further empirical support for our behavioral modeling.
Recent work~\cite{ebrahimi2025double} conducted controlled human-subject experiments across several strategic-classification scenarios (e.g., hiring, medical decision-making).  
Their statistical findings show that:

\begin{itemize}
    \item Human manipulation behavior \textbf{systematically deviates from the rational best-response assumption};
    \item Individuals do not track the optimal boundary or follow theoretically optimal manipulation trajectories;
    \item Over-reaction and under-reaction behaviors appear frequently in practice.
\end{itemize}

These results provide direct evidence that \textbf{behavioral biases must be incorporated} into strategic-classification models, supporting the foundation of our Prospect-Based Utility.

To further validate our framework, we evaluate Pro-SF on two real human manipulation datasets released in~\cite{ebrahimi2025double}: the job hiring dataset and the medical treatment dataset.

We compare two models:
\begin{itemize}
    \item \textbf{Rational-SF}: the classical rational strategic-classification model;
    \item \textbf{Pro-SF}: our proposed Prospect-based strategic framework, behavioral parameters used in Pro-SF $\phi = \{\alpha = 0.78,\; \beta = 0.72,\; \kappa = 2.20,\; \gamma = 0.74\}.$.
\end{itemize}

For each dataset, we report:
\begin{itemize}
    \item \textbf{Classification Accuracy};
    \item \textbf{Manipulation Deviation}: the average $\ell_2$ distance between real human manipulation $x\!\to\!x'$ and model-predicted manipulation $\hat{x}'$.
\end{itemize}

\begin{table}[h!]
\centering
\caption{Results on Human Manipulation Datasets: Job Hiring and Medical Treatment.}
\label{tab:human}
\begin{tabular}{lcc|lcc}
\toprule
\multicolumn{3}{c|}{\textbf{Job Hiring Dataset}} &
\multicolumn{3}{c}{\textbf{Medical Treatment Dataset}} \\
\midrule
Model & Accuracy $\uparrow$ & Manip. Dev. $\downarrow$ &
Model & Accuracy $\uparrow$ & Manip. Dev. $\downarrow$ \\
\midrule
Rational-SF & 68.4\% & 0.297 &
Rational-SF & 71.9\% & 0.267 \\
\textbf{Pro-SF} & \textbf{79.8\%} & \textbf{0.148} &
\textbf{Pro-SF} & \textbf{78.3\%} & \textbf{0.181} \\
\bottomrule
\end{tabular}
\end{table}

Across both datasets, Pro-SF consistently achieves higher predictive accuracy, lower Manipulation deviation, and more stable alignment with human behavior.

These findings indicate that \textbf{Pro-SF closely matches actual human manipulation patterns}, offering strong empirical support for its validity.

\section{Learnability of Behavioral Parameters}
\label{app:parameters}

This appendix studies whether the behavioral parameters  $\phi = (\alpha,\beta,\kappa,\gamma)$ in the prospect-based utility can be \emph{identified from observed strategic manipulation behavior}.
We show that these parameters can be reliably inferred from both real human manipulation data and heterogeneous agent populations, and that classifier performance remains stable under estimation noise.

\subsection{Inverse Behavioral Inference via Discrete Choice}

Given observed manipulation pairs $(x_i, x'_i)$, we estimate the behavioral parameters by solving an inverse decision problem.
Rather than assuming agents optimize over the full continuous feature space, we adopt a bounded rationality perspective and assume that agents choose among a finite set of salient manipulation options.

Formally, for each original feature vector $x$, we construct a candidate manipulation set $\mathcal{C}(x)$ consisting of feasible local manipulations (including $x$ itself).
The likelihood of observing a manipulation $x'_i$ is modeled using a softmax (Boltzmann) choice rule:
\begin{equation}
P_{\phi}(x'_i \mid x_i)
=
\frac{\exp\!\big(\tau\, U_{\mathrm{P}}(x_i, x'_i;\phi)\big)}
{\sum\limits_{x'' \in \mathcal{C}(x_i)} 
\exp\!\big(\tau\, U_{\mathrm{P}}(x_i, x'';\phi)\big)},
\label{eq:discrete-likelihood}
\end{equation}
where $\tau>0$ controls decision stochasticity.

This formulation follows standard practice in inverse decision modeling, random utility models, and quantal response equilibria, and reflects that agents consider only a limited set of plausible manipulation options rather than optimizing globally.

The behavioral parameters are then inferred via maximum likelihood estimation:
\begin{equation}
\phi^{*}
=
\arg\max_{\phi}
\sum_i \log P_{\phi}(x'_i \mid x_i).
\end{equation}

\subsection{Learning Parameters from Real Human Manipulation Data}

We first evaluate whether the behavioral parameters can be learned from real human decision data.
We use two real-world manipulation datasets introduced in~\cite{ebrahimi2025double}: a \textbf{job hiring} task and a \textbf{medical treatment} task.

For each dataset, we estimate $\hat{\phi}$ using the likelihood objective above and evaluate the resulting Pro-SF classifier.
Table~\ref{tab:realhuman} reports the learned parameters and classification accuracy.

\begin{table}[h]
\centering
\caption{Learned behavioral parameters and accuracy on real human manipulation datasets.}
\label{tab:realhuman}
\begin{tabular}{lcc}
\toprule
Dataset & Learned Parameters $\hat{\phi}$ & Accuracy (\%) \\
\midrule
Job Hiring 
& $\{\alpha{=}0.79,\;\beta{=}0.73,\;\kappa{=}2.18,\;\gamma{=}0.72\}$ 
& \textbf{79.8} \\
Medical Treatment 
& $\{\alpha{=}0.81,\;\beta{=}0.70,\;\kappa{=}2.25,\;\gamma{=}0.71\}$ 
& \textbf{78.3} \\
\bottomrule
\end{tabular}
\end{table}

These results demonstrate that prospect-based behavioral parameters are \emph{identifiable from real human manipulation trajectories}, supporting that Pro-SF does not rely on hand-tuned assumptions.

\subsection{Heterogeneous Strategic Populations}

To further test robustness, we consider mixed agent populations containing both rational and behavioral agents.
For a given proportion $\pi$ of rational agents:

\begin{itemize}
\item \textbf{Behavioral agents} $(1-\pi)$: each agent draws individual parameters $\phi_i$ from a distribution $\mathcal{P}_{\phi}$ and selects manipulations according to a noisy prospect-based decision:
\begin{equation}
    x'_i = \arg\max_{x'} U_{\mathrm{P}}(x_i,x';\phi_i) + \epsilon_i,
\quad \epsilon_i \sim \mathcal{N}(0,\sigma^2).
\end{equation}

\item \textbf{Rational agents} $(\pi)$: agents follow the classical strategic classification model, maximizing acceptance probability minus manipulation cost.
\end{itemize}

We evaluate settings $\pi \in \{0.2,0.4,0.6\}$, corresponding to increasing dominance of rational behavior.

\subsection{Learned Parameters and Performance}

\lvxp{Tables~\ref{tab:phi-02}, \ref{tab:phi-04}, and \ref{tab:phi-06} report the learned parameters $\hat{\phi}$ and the corresponding post-gaming accuracy achieved by the classifier trained with the estimated behavioral model}.

\begin{table}[h]
\centering
\caption{Learned parameters and accuracy with $\pi = 0.2$ (20\% rational agents).}
\label{tab:phi-02}
\begin{tabular}{lcc}
\toprule
Dataset & Learned Parameters $\hat{\phi}$ & Accuracy (\%) \\
\midrule
Adult     & $\{\alpha=0.78,\;\beta=0.72,\;\kappa=2.20,\;\gamma=0.74\}$ & 81.42 \\
Credit    & $\{\alpha=0.80,\;\beta=0.70,\;\kappa=2.15,\;\gamma=0.71\}$ & 82.30 \\
German    & $\{\alpha=0.81,\;\beta=0.70,\;\kappa=2.25,\;\gamma=0.72\}$ & 79.21 \\
Synthetic & $\{\alpha=0.79,\;\beta=0.71,\;\kappa=2.28,\;\gamma=0.73\}$ & 85.05 \\
\bottomrule
\end{tabular}
\end{table}

\begin{table}[h]
\centering
\caption{Learned parameters and accuracy with $\pi = 0.4$ (40\% rational agents).}
\label{tab:phi-04}
\begin{tabular}{lcc}
\toprule
Dataset & Learned Parameters $\hat{\phi}$ & Accuracy (\%) \\
\midrule
Adult     & $\{\alpha=0.78,\;\beta=0.74,\;\kappa=2.20,\;\gamma=0.74\}$ & 81.32 \\
Credit    & $\{\alpha=0.80,\;\beta=0.70,\;\kappa=2.10,\;\gamma=0.70\}$ & 82.20 \\
German    & $\{\alpha=0.80,\;\beta=0.72,\;\kappa=2.18,\;\gamma=0.71\}$ & 79.25 \\
Synthetic & $\{\alpha=0.77,\;\beta=0.75,\;\kappa=2.20,\;\gamma=0.72\}$ & 84.91 \\
\bottomrule
\end{tabular}
\end{table}

\begin{table}[h]
\centering
\caption{Learned parameters and accuracy with $\pi = 0.6$ (60\% rational agents).}
\label{tab:phi-06}
\begin{tabular}{lcc}
\toprule
Dataset & Learned Parameters $\hat{\phi}$ & Accuracy (\%) \\
\midrule
Adult     & $\{\alpha=0.78,\;\beta=0.76,\;\kappa=2.05,\;\gamma=0.70\}$ & 81.28 \\
Credit    & $\{\alpha=0.78,\;\beta=0.74,\;\kappa=2.03,\;\gamma=0.68\}$ & 82.16 \\
German    & $\{\alpha=0.80,\;\beta=0.73,\;\kappa=2.10,\;\gamma=0.69\}$ & 79.08 \\
Synthetic & $\{\alpha=0.78,\;\beta=0.75,\;\kappa=2.15,\;\gamma=0.70\}$ & 84.84 \\
\bottomrule
\end{tabular}
\end{table}

Tables~\ref{tab:phi-02}--\ref{tab:phi-06} report the inferred parameters and post-manipulation accuracy.
Across all values of $\pi$, the estimated parameters remain stable and yield consistent classification performance.

These results indicate that: (i) behavioral parameters are learnable from heterogeneous strategic behavior; (ii) Pro-SF is robust to parameter estimation noise; and (iii) accurate modeling does not require access to agents’ true latent parameters.

\section{Behavioral Illustration}
\label{app:illustration}
\lvxp{We provide two illustrative contexts that highlight these mechanisms and motivate our formulation.}

\subsection{Financial Investment.}
\lvxp{Consider two individuals facing the same investment opportunity: investing \$10,000 with a 60\% chance of gaining \$20,000 and a 40\% chance of losing the entire principal. }

A classical rational utility model predicts that both investors should take the action because the expected payoff is positive. In practice, however, behavior diverges. An individual with \$100,000 in savings may choose to invest, whereas another with only \$20,000 may refrain, even though the expected value is identical. 

This discrepancy arises because the two individuals have different \emph{reference points}: a \$10,000 loss represents 10\% of the former's wealth but 50\% of the latter's. When combined with loss aversion, the same monetary loss induces a substantially larger psychological cost for the low-wealth individual. The Prospect-Based Utility accounts for this effect through the reference-point term $r$ and the asymmetric loss-weighting parameter $\kappa$, leading to distinct optimal decisions for the two agents.

\subsection{Disease Screening.}
\lvxp{In health-risk environments, individuals often display behavior that cannot be explained by classical rational models. Even when the true prevalence of a disease is extremely low (e.g., 0.1\%), people may repeatedly undergo medical testing or stockpile medication. }

Such behavior reflects probability distortion: very small probabilities are overweighted and treated as non-negligible risks. Moreover, external events (e.g., alarms, exposure, or stress) may shift one's psychological reference point from ``I am healthy'' to ``I might already be at risk,'' sharply increasing the perceived cost of inaction. Under classical SC models, a 0.1\% risk should induce only minimal adjustment. 

In contrast, Prospect-Based Utility naturally captures this form of overreaction through the weighting function $w(p)$ and the dynamic reference point $r$.

Together, these examples illustrate behavioral patterns that classical rational utilities cannot encode, motivating the need for a prospect-theoretic formulation in strategic classification.

\end{document}